\tikzstyle{block}=[draw opacity=0.7,line width=1.4cm]
\newcommand{\node}{v}
\newcommand{\labelfun}{L}
\newcommand{\hash}{\pi}
\newcommand{\subtreerootedat}[1]{\raisebox{-0.1cm}{$\stackrel{#1}{\triangle}$}}
\newcommand{\subtreelimitedrootedat}[2]{\raisebox{-0.1cm}{$\stackrel{#1}{\triangle}\!\!{\text{\textbar}_{#2}}$}}
\newcommand{\nuovotk}{{ST+}}
\newcommand{\depth}{depth}
\newcommand{\visit}[1]{\subtreerootedat{#1}}
\newcommand{\limitedvisit}[2]{\subtreelimitedrootedat{#1}{#2}}
\newtheorem{theorem}{Theorem}[section]
\newenvironment{proof}[1][Proof]{\begin{trivlist}\item[\hskip \labelsep {\bfseries #1}]}{\end{trivlist}}
\journal{Neurocomputing}
\begin{document}
\begin{frontmatter}

\title{Ordered Decompositional DAG Kernels Enhancements}

\author[qcri]{Giovanni Da San Martino}
  \ead{gmartino@qf.org.qa}
\author[unipd]{Nicol\`o Navarin\corref{cor1}}
  \ead{nnavarin@math.unipd.it}
\author[unipd]{Alessandro Sperduti} 
  \ead{sperduti@math.unipd.it}

\cortext[cor1]{Corresponding author}
\address[qcri]{Qatar Computing Research Institute, HBKU, P.O. Box 5825 Doha, Qatar}
\address[unipd]{Department of Mathematics, University of Padova, via Trieste 63, Padova, Italy}

\begin{abstract}
In this paper, we show how the Ordered Decomposition DAGs (ODD) kernel framework, 
a framework that allows the definition of graph kernels from tree kernels, allows to easily define new state-of-the-art graph kernels. 
Here we consider a fast graph kernel based on the Subtree kernel (ST), and we propose various enhancements to increase its expressiveness. 
The proposed DAG kernel has the same worst-case complexity as the one based on ST, but an improved expressivity due to an augmented set of features.
Moreover, we propose a novel weighting scheme for the features, which can be applied to other kernels of the ODD framework. 
These improvements allow the proposed kernels to improve on the classification performances of the ST-based kernel for several real-world datasets, reaching state-of-the-art performances. 

\end{abstract}

\begin{keyword}
  Kernel Methods, kernel functions, Graph Kernels, Classification



\end{keyword}

\end{frontmatter}

\section{Introduction} \label{sec:introduction}

The increasing availability of data in structured form, such as trees~\cite{Denoyer2007} or graphs~\cite{dobson2003, springerlink:10.1007/s10115-007-0103-5, Weislow1989}, has led to the development of machine learning techniques able to deal directly with such types of data. 
Among these, kernel methods, such as Support Vector Machines (SVM) \cite{Taylor-Cristianini:Book2004}, have become very popular due to their generalization ability and state of the art performances in many tasks, such as relationship extraction~\cite{Simoes2013}, analysis of RDF data~\cite{Vries2013a}, action recognition~\cite{Wang2013a}, text categorization of biomedical data~\cite{Bleik2013} and bioinformatics~\cite{Kundu2013}. 

The class of kernel methods comprises all those learning algorithms which do not require an explicit representation of the input, but only information about the similarities among them. 
A simple way of assessing the similarity between two objects described by a set of features is to compute the dot product of their representation in feature space. 
If a ``similarity'' function $k(\cdot,\cdot)$, corresponding to a dot product $\langle \cdot ,\cdot\rangle$ in feature space, is available,
the intermediate step of explicitly representing the data can be avoided. In fact, computing $k(x_1,x_2)$  implicitly corresponds to perform a 
nonlinear transformation of the input vectors $x_1$ and $x_2$ via a function $\phi(\cdot)$ and then to compute the dot product of the resulting vectors $\phi(x_1)$ and $\phi(x_2)$. The function $\phi(\cdot)$ projects the input vectors into a feature space of much higher (possibly infinite) dimension where it is more likely to accomplish the learning task. 
Kernel methods generally formulate a learning problem as a constrained optimization one, where an objective function combining an empirical risk term with a (quadratic) regularizer must be minimized. If the employed kernel function is symmetric positive semidefinite, the problem is convex and thus has a global minimum \citep{Taylor-Cristianini:Book2004}. 

Any kernel method can be decomposed into two modules: {\it i)} a problem specific kernel function; {\it ii)} a general purpose learning algorithm (the solver). 
Since the solver interfaces with the problem only by means of the kernel function, it can be used with any kernel function, and vice-versa. Examples of popular kernel methods are the perceptron \citep{doi:10.1137/S0097539703432542} for the on-line setting, and the Support Vector Machines \citep{Taylor-Cristianini:Book2004} for the batch setting. Note that, provided an appropriate kernel function is given, any kernel method can be applied to any type of data. More importantly, the kernel function encodes all the information about the input data, thus the definition of appropriate kernel functions is crucial for the outcome of the learning algorithm.

A popular strategy for defining kernel functions for structured data is to decompose the structures into their constituent parts, and then, for each pair of parts, apply a local kernel \cite{Haussler1999}. 
While this strategy has been proved successful for strings and trees \cite{Collins2002, Vishwanathan2003, Moschitti2006a, Aiolli2009, Aiolli2011, DBLP:conf/icann/BacciuMS12}, it is not directly applicable to graphs because of the computational complexity issues which arise: representing a graph in terms of its subgraphs is not feasible since subgraph isomorphism, an {\it NP-complete} problem, should be solved for each pair of subgraphs. In \cite{Gartner2003a} it has been demonstrated that, any kernel whose feature space mapping is injective, is as hard to compute as graph isomorphism, an {\it NP} problem that still is not known whether it is in {\it P} or if it is {\it NP-complete}. 
Due to this limitation, the available strategies for building kernels are: \textit{i}) restricting the input domain to a class of graphs for which isomorphism can be checked quickly \cite{Schietgat2009}; \textit{ii}) select a priori a set of features, usually corresponding to a specific type of substructure, such as walks \cite{Gartner2003a}, paths \cite{Costa2010, Suard2007}, subtree patterns \cite{Mah'e2009, Shervashidze2009a}. 
 The former approach can be applied to a limited type of graphs, the latter tends to have a limited flexibility: when the available kernels are not relevant to the task, a new one has to be designed. However, defining an efficient symmetric positive semidefinite kernel, corresponding to the desired feature space, can be an extremely difficult task. 
 All the above approaches discard information about the original graph and are effective only when the selected features are relevant for the current problem. 
We propose to design graph kernels as follows: first transform the graphs into simpler structures, i.e. multisets of directed acyclic graphs (DAGs), and then extend the definition of a large class of already available kernels for trees to DAGs. 
Our approach allows the application of the vast literature on kernels for trees, which consists of fast and/or very expressive kernels, to the graph domain. 

{Generally speaking, a serious drawback which prevents many of the kernels listed above to be applied to large datasets is their computational time complexity. 
Those kernels which can be applied to large datasets exploit a ``limited'' number of features to represent a graph. For example, the kernel proposed in  \cite{Shervashidze2009a} has a linear complexity in the number of edges of the graphs because any graph is represented in the feature space by a number of non-zero features which is proportional to the number of nodes of the graph. On the other hand, a too compact representation of a graph in feature space may have a negative impact on the effectiveness of the kernel because of a reduced discrimination ability.}  

{In this paper, we tackle this problem by proposing various enhancements to a fast graph kernel based on the Subtree kernel for trees (ST)~\cite{Collins2001}.} 
Among these, the main contribution is a novel tree kernel, which has the same worst-case complexity of the ST kernel, while the size of its feature space is much larger.  

The paper is structured as follows. 
Section~\ref{sec:definitions} introduces some basic notation and definitions. 
Section~\ref{sec:background} recalls the ODD framework, of which the proposed kernels are instances. 
Section~\ref{sec:dagkernels} describes the main contributions of the paper: the $\nuovotk$ kernel for DAGs and a novel weighting scheme for the features, which can be applied to other kernels of the ODD framework. 
Section~\ref{sec:relatedwork} discusses some related kernels for graphs, and Section~\ref{sec:exps} provides experimental evidence of the effectiveness of the proposed approaches. 
Finally, Section~\ref{sec:conclusions} draws conclusions. 

{The paper extends the work in \cite{DaSanMartino2015} by adding: {\it i)} a self-contained and simplified description of the $\nuovotk$ kernel; {\it ii)} a novel, more effective, feature weighting scheme; {\it iii)} an extended and revised ``Related Work'' section; {\it iv)}  a novel set of experiments which are now performed on much larger benchmark datasets and for a larger number of competing graph kernels; {\it v)} a comparison among  empirical execution times of the various experimented kernels.} 

\section{Notation} \label{sec:definitions}
 
A graph is a triplet $G=(V,E,L)$, where $V$ (alternatively $V_G$) is the set of nodes ($|V|$ is the number of nodes), $E$ the set of edges and $\labelfun()$ a function returning the label of a node. All labels are obtained from a fixed alphabet $\cal{A}$. 
A graph is undirected if $(\node_i,\node_j)\in E \Leftrightarrow (\node_j,\node_i)\in E$, otherwise it is directed. 
{A path in a graph is a sequence of nodes $\node_1,\ldots,\node_n$ such that $\node_i \in V, 1\leq i\leq n$, \mbox{$(\node_i,\node_{i+1})\in E$} and 
$\forall 1\leq i\leq n, 1\leq j<n, j\neq i. \node_i\neq\node_j$ (no node, except the first one, can appear twice in the same path).} 
A cycle is a path for which $\node_1=\node_n$; 
a cycle is even/odd if its number of nodes is even/odd, respectively. 
A graph is connected if there exists a path connecting each pair of nodes. 
A connected graph is rooted if exactly one node has no incoming edges. 
A graph is ordered if the set of neighbours of each node is ordered.  
A tree is a rooted connected directed acyclic graph where each node has at most one incoming edge. 
A subtree of a tree $T$ is a connected subset of nodes of $T$. 
A proper subtree is a subtree composed by a node and all of its descendants. 
Given a node $\node$ of a tree, $\rho(\node)$ represents the outdegree of $\node$, i.e. the number of nodes connected to $\node$. We will use $\rho$ as the maximum outdegree of a node in either a tree or a graph. 
The depth $\depth(\node)$ of a node $\node$ is the number of edges in the shortest path between the root of the tree and $\node$. 
If the tree is ordered, $ch_{\node}[j]$ represents the $j$-th child of $\node$ and $chs_{\node}[j_1,j_2,\ldots,j_n]$ indicates the set of children of $\node$ with indices $j_1, j_2,\ldots,j_n$. 
{Given a graph $G$ and a node $\node\in V(G)$, we define a subtree-walk of size $h$ as the tree obtained by the following procedure: the root of the tree is $\node$; at each step $i$, with $1\leq i\leq h$, and for each current leaf node $\node_j$ of the tree, any neighbouring node of $\node_j$ in $G$ is added to the tree as a child of $\node_j$. Note that, when $h>1$, typically a node of $G$ can appear multiple times in the same subtree-walk. Given a DAG $D$ and a node $\node_i\in V(D)$, we define a tree-visit, denoted by $\visit{\node_i}$, as the tree resulting from the visit of $D$ starting from the node $\node_i$. Such visit returns all the nodes of $D$ reachable from $\node_i$.  
 If a node $\node_{j}$ can be reached more than once, more occurrences of $\node_{j}$ will appear in $\visit{\node_i}$ (see Figure~\ref{fig:nuovokernelfeaturespace}-b for an example).}  

\section{Preprocessing: from Graphs to Multisets of DAGs} \label{sec:background}

This section recalls the ODD-Kernels framework for graphs~\cite{Dasan2012}.  
The idea of our approach is to transform the graphs into simpler structures, i.e. DAGs, and then apply a kernel for such structures. 
The following subsections explain each step of the transformation. 
\subsection{From Graph to DAGs}
  The graph $G$ is mapped into a multiset of DAGs $DD_{G}=\{DD_{G}^{\node_{i}} | \node_{i} \in V_{G} \}$, where $DD_{G}^{\node_{i}}=(V_G^{\node_i}, E_G^{\node_i}, L)$ is obtained by keeping each edge in the shortest path(s) connecting $\node_i$ with any $\node_j\in V_G$. 
  From a practical point of view, \mbox{$DD_G^{\node_i}$} can be built by performing a breadth-first visit on the graph $G$ starting from node $\node_i$ and 
  applying the following rules: 
\begin{enumerate}
	\item during the visit a direction is given to each edge; if $\node_j$ is reached from $\node_i$ in one step, then $(\node_i,\node_j)\in E_G^{v_i}$ (note that edge $(\node_j,\node_i)$ is not added to $E_G^{v_i}$);
  \item edges connecting nodes reached at level $l$ of the visit to nodes reached at level $g<l$ are not added to $E_G^{v_i}$ (such edges would induce a cycle in $DD_G^{\node_i}$.) 
\end{enumerate}
For every choice of $G$ and $\node_{i}$, a single \textit{Decompositional Dag} $DD^{\node_i}_G$ is generated. 
 By repeating the procedure for each node of the graph, $|V|$ DAGs are obtained. 
   Figure~\ref{fig:dagconstruction} shows the four $DD$s obtained from the undirected graph in Figure~\ref{fig:dagconstruction}-a.  
  Note that when the same node is reached simultaneously (at the same level of the visit) from different nodes, then all involved edges are preserved. For example,  when considering the visit at level $2$ starting from node \textbf{s}, the 
   node \textbf{d} is reached simultaneously by edges $(\textbf{b},\textbf{d})$ and $(\textbf{e},\textbf{d})$, and both of them are preserved in the corresponding Decompositional DAG (see Figure~\ref{fig:dagconstruction}-b).  
\begin{figure}[ht]
\centering
\begin{tikzpicture}[auto,thick,scale=.5]
   \tikzstyle{graph}=[scale=1.0,minimum size=12pt, inner sep=0pt, outer sep=0pt,ball color=blue,circle, text=white]
		\node (s) [xshift=0.5cm,graph] {d}   
					child {
            node (b) [graph] {b}
        	};
         \node (e) [graph] at (0:3){e}   
         		child { node (d) [graph] {s}};		
    ;    	
    \draw
    (s) to (e)
    (b) to (d)
    (b) to (e)
    ;
  \small
	 \draw [red,->,line width=5pt] (3.8,-0.8)--(5.6,-0.8);
	 \node [yshift=-0.40cm] {a)};
	 \node [yshift=0.5cm, xshift=3.25cm] {b)};
	 \node [yshift=0.5cm, xshift=5.6cm] {c)};
	 \node [yshift=-1.0cm, xshift=3.25cm] {d)};
	 \node [yshift=-1.0cm, xshift=5.6cm] {e)};
   \tikzstyle{node}=[minimum size=12pt, inner sep=0pt, outer sep=0pt,ball color=blue,circle, text=white]
        \node [node,xshift=4.2cm, yshift=1.3cm] {\textbf{s}} [->] 
                child {
                    node [node] {\textbf{e}}[] 
                    child {node (d2)[node, xshift=0.35cm]{\textbf{d}}[]}
                }
                child {
                    node (b2) [node]{\textbf{b}}[]
                }
				;
         \draw [->]
         (b2) to (d2)
        ;
        \node (e) [node,xshift=6.8cm,yshift=1.3cm] {e}
        		child[->] {
            			node (s) [node] {s}
            			}
						child[->]{
									node (b) [node] {b}
									}
	   				child[->] {
            			node (d) [node] {d}
            			}
        ;
        \node [node,xshift=4.2cm,yshift=-1.0cm] {b}
        		child[->] {
            			node [node] {s}
            			}
						child[->]{
									node [node] {e}
									}
	   				child[->] {
            			node [node] {d}
            			}
        ;    
        \node [node,xshift=6.8cm, yshift=-0.2cm] {\textbf{d}} [->] 
                child {
                    node [node] {\textbf{b}}[] 
                    child {node (d2)[node, xshift=0.35cm]{\textbf{s}}[]}
                }
                child {
                    node (e2) [node]{\textbf{e}}[]
                }
				;
         \draw [->]
         (e2) to (d2)
        ;    
\end{tikzpicture}
\caption{Example of decomposition of a graph a) into its $4$ DDs b-e).\label{fig:dagconstruction}}
\end{figure}
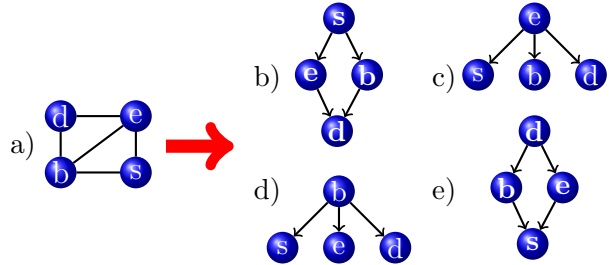
  In order to reduce the total number of nodes of $DD_G^v$, we propose to limit the depth of the visits during the generation of the multiset of DAGs ~\cite{Dasan2012} to a constant value $h$.
\textcolor{black}{The resulting DAG will be referred to as $DD_G^{v,h}$.}
  Given $\node\in V_G$, let $H$ be the number of nodes generated by the visits up to depth $h$. 
  An upper bound for $H$ is $\rho^h$. Notice, this is a loose bound, in many practical cases. 
  The total number of nodes  of $DD_G$ is $|V_G|H$. Note that, if $\rho$ is constant, then also $H$ is constant. 

\subsection{Ordering DAG nodes}
  The kernels we define in the following, which are all straightforwardly derived from tree kernels, require DAG nodes to be ordered. 
  Therefore, we define a strict partial order $\dot{<}$ between DAG nodes in $DD_{G}^{\node_{i}}$ obtaining Ordered DAGs $ODD_{G}^{\node_{i}}$. 
  The ordering makes use of a unique representation of subtrees as strings inspired by \cite{Vishwanathan2003}. 
  Here we modify such mapping by employing perfect hash functions{, i.e. hash functions which guarantee to have no collisions,} to encode subtrees~\cite{Shervashidze2009a, DaSanMartino2012}. 
  Let $\kappa()$ be a perfect hash function, $\#,\lceil,\rfloor$ be symbols never appearing in any node label and $ch_\node[j]$ the $j$-th node in the ordered sequence of outcoming edges of $\node$, then 
\begin{equation}
\hash(\visit{\node})=\begin{cases}
  \kappa(L(\node)) \text{\hspace{3.2cm}if $\node$ is a leaf node}\\
  \kappa\big(L(\node)\big\lceil \hash(\visit{ch_\node[1]})\#\ldots\#\hash(\visit{ch_\node[\rho(\node)]})\big\rfloor\big) \text{\hspace{0.1cm}otherwise}\\ 
  \end{cases}
  \label{eq:orderingfunction}
 \end{equation}
 where the children of $\node$ are recursively ordered according to their $\hash()$ values. To simplify notation, in the following, when it is
 clear from the context, we will use the notation $\hash(\node)$ instead of $\hash(\visit{\node})$.
Then $\node_{i} \dot{<} \node_{j}$ if $\hash(\node_i)<\hash(\node_j)$, where $<$ is the relation of order between alphanumeric strings. 
Notice that $\hash(\node_i)=\hash(\node_j)\Leftrightarrow \neg(\node_i \dot{<} \node_j) \wedge \neg(\node_j \dot{<} \node_i)$, i.e.  $\hash(\node_i)=\hash(\node_j)$ if and only if the nodes $\node_i$ and $\node_j$ are not comparable. 
In such case, many orderings for non comparable children nodes according to $\dot{<}$
 are possible. 
 We are now going to prove some results that will make it easier to show, in Section~\ref{sec:dagkernels}, that each kernel described in this paper (as well as for a large class of kernels for trees) yield the same features, independently of the ordering of non comparable nodes. 
 Since all the features of the kernels in Section~\ref{sec:dagkernels} are extracted from tree visits of DAG nodes, 
 our goal here is to show that isomorphic DAGs yield the same tree visits. 
 We first show that if two DAGs $DD_{G_1}^{\node_i}$ and $DD_{G_2}^{\node_j}$ are isomorphic, then the root nodes of the DAGs are not comparable with respect to the ordering $\dot{<}$, in fact:
\begin{theorem} \label{teo:dagisomorfi}
	if two DAGs $DD_{G_1}^{\node_i}$ and $DD_{G_2}^{\node_j}$ are isomorphic, then\\ \mbox{$\neg(\node_i \dot{<} \node_j) \wedge \neg(\node_j \dot{<} \node_i)$}. 
\end{theorem}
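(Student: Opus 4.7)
The plan is to argue by structural induction on the height of the tree visits, reducing the statement to showing that isomorphic DAGs produce identical hash strings $\pi$. Since $v_i \dot{<} v_j$ is defined as $\pi(v_i)<\pi(v_j)$ (with strings compared lexicographically), the negation of both inequalities amounts to $\pi(v_i)=\pi(v_j)$. So the target becomes: if $DD_{G_1}^{v_i}\cong DD_{G_2}^{v_j}$ via some isomorphism $f$ with $f(v_i)=v_j$, then $\pi(\visit{v_i})=\pi(\visit{v_j})$.

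First I would observe that a DAG isomorphism lifts to an isomorphism of the associated tree visits: unfolding both DAGs by a breadth-first visit from the respective roots yields trees that are isomorphic in the obvious sense (the bijection on nodes is inherited from $f$, label-preserving, and mapping children to children). This reduces the problem to proving that isomorphic labelled rooted trees receive the same $\pi$ value under the recursive definition in Equation~(\ref{eq:orderingfunction}).

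I would then perform induction on the height of $\visit{v_i}$ (equivalently, of $\visit{v_j}$, since they are isomorphic). In the base case, $v_i$ is a leaf, so by isomorphism $v_j$ is also a leaf, and $L(v_i)=L(v_j)$, giving $\pi(\visit{v_i})=\kappa(L(v_i))=\kappa(L(v_j))=\pi(\visit{v_j})$. For the inductive step, $v_i$ and $v_j$ have the same label and, via $f$, the same number of children paired bijectively so that the subtree visits at corresponding children are isomorphic. The inductive hypothesis yields equal $\pi$ values on each pair of corresponding children. Hence the multisets of children's $\pi$ strings at $v_i$ and $v_j$ coincide.

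The main subtlety—which I view as the central obstacle—is that the definition of $\pi$ requires the children to be sorted under $\dot{<}$, and this order is only partial: among children with identical $\pi$ values any tie-breaking is allowed. I would handle this by noting that when two children's $\pi$ strings are equal, swapping them in the concatenation $\pi(ch_v[1])\#\cdots\#\pi(ch_v[\rho(v)])$ leaves the concatenation unchanged; therefore, any total order refining $\dot{<}$ produces the same sorted sequence of strings. Applying this to both $v_i$ and $v_j$, whose children carry the same multiset of $\pi$ values, the concatenated strings inside the $\lceil\cdots\rfloor$ brackets are identical, so $\pi(\visit{v_i})=\pi(\visit{v_j})$, and thus neither $v_i\dot{<}v_j$ nor $v_j\dot{<}v_i$ holds.
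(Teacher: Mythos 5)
Your proof is correct and follows essentially the same route as the paper's: an induction on the structure (height) of the tree visits showing that isomorphic DAGs receive equal hash values, from which non-comparability under $\dot{<}$ follows. The one place you go beyond the paper is in explicitly justifying why tie-breaking among children with equal $\pi$ values cannot affect the concatenated string---a point the paper's inductive step leaves implicit by writing $\hash(ch_{\node_i}[l])=\hash(ch_{f(\node_i)}[l])$ as if the isomorphism automatically respected the chosen child orderings.
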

\begin{proof}
  Let \mbox{$f: V_{G_1} \rightarrow V_{G_2}$} be an isomorphism between $DD_1^{\node_i}$ and $DD_2^{\node_j}$. 
  We prove the thesis by induction. Let $f(\node_i)=\node_j$, since the nodes are isomorphic $L(\node_i)=L(\node_j)$. If $\node_i$ and $\node_j$ are leaf nodes, then $\hash(\node_i)=\hash(\node_j)$ and consequently $\neg(\node_i \dot{<} \node_j) \wedge \neg(\node_j \dot{<} \node_i)$. Otherwise, by inductive hypothesis $\forall l. 1\leq l\leq \rho(\node_i).$  $\hash(ch_{\node_i}[l])=\hash(ch_{f(\node_i)}[l])$ and $L(\node_i)=L(f(\node_i))$, thus $\hash(\node_i)=\hash(f(\node_i))=\hash(\node_j)$. 
\end{proof}
%
%
%
The following theorem shows that two non comparable nodes $\node_i, \node_j$, yield identical tree visits $\visit{\node_i}$, $\visit{\node_j}$: 
\textcolor{black}{
\begin{theorem} \label{teo:visits}
	Given the ordering $\dot{<}$, $\neg(\node_i \dot{<} \node_j) \wedge \neg(\node_j \dot{<} \node_i)$ if and only if 
	 $\subtreerootedat{\node_i}$ and $\subtreerootedat{\node_j}$ 
	 are identical.
\end{theorem}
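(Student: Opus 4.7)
The plan is to reduce the theorem to an equivalent statement about the perfect hash $\hash$. The excerpt already notes that, by construction of $\dot{<}$, we have $\hash(\node_i)=\hash(\node_j)\Leftrightarrow \neg(\node_i \dot{<} \node_j) \wedge \neg(\node_j \dot{<} \node_i)$. So the theorem is equivalent to showing $\hash(\node_i)=\hash(\node_j)$ iff $\visit{\node_i}$ and $\visit{\node_j}$ are identical trees. I would prove both directions by structural induction on the tree visits, exploiting the fact that $\kappa$ is a perfect (hence injective) hash function and that the symbols $\#,\lceil,\rfloor$ never occur in node labels, so the encoded string produced inside $\kappa$ is uniquely parseable.

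For the ($\Leftarrow$) direction, I would induct on the depth of $\visit{\node_i}$. In the base case both visits are single labelled nodes, identical means same label, and $\hash$ reduces to $\kappa(L(\cdot))$ on both sides. In the inductive step, identical visits have the same root label, the same number of children, and identical child subtree-visits in the same order; by the inductive hypothesis the children have pairwise equal hash values, hence by Equation~\ref{eq:orderingfunction} the argument strings fed to $\kappa$ coincide and $\hash(\node_i)=\hash(\node_j)$.

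For the ($\Rightarrow$) direction, again by induction on depth. If $\hash(\node_i)=\hash(\node_j)$, injectivity of $\kappa$ forces the argument strings to be equal. Because $\lceil,\rfloor,\#$ are reserved separators, parsing is unambiguous: I can read off equal root labels $L(\node_i)=L(\node_j)$, equal outdegrees $\rho(\node_i)=\rho(\node_j)$, and, position by position, equal children hashes $\hash(ch_{\node_i}[k])=\hash(ch_{\node_j}[k])$. The inductive hypothesis then gives that $\visit{ch_{\node_i}[k]}$ and $\visit{ch_{\node_j}[k]}$ are identical for every $k$, so $\visit{\node_i}$ and $\visit{\node_j}$ agree as trees.

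The main obstacle will be handling the \emph{non-uniqueness} of the ordering when some children are pairwise incomparable (equal $\hash$ value): in that case Equation~\ref{eq:orderingfunction} admits several admissible orderings of the children of $\node_i$ (and similarly for $\node_j$), so a priori the child sequences used in the two encodings might not line up. I would address this by observing that any two orderings chosen among incomparable children differ only by permutations within groups of children whose hashes are equal; by the inductive hypothesis such children already have identical tree visits, so any such permutation leaves both the encoded string passed to $\kappa$ and the resulting tree visit unchanged. This makes the induction go through independently of the specific tie-breaking used when building the ordered DAGs.
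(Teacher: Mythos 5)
Your proof is correct and follows essentially the same route as the paper's: reduce the statement to $\hash(\node_i)=\hash(\node_j)$ via the noted equivalence with non-comparability, then prove both directions by structural induction on the tree visits, using injectivity of the perfect hash $\kappa$ and the reserved separators to recover labels and children hashes. You additionally address the tie-breaking among incomparable children explicitly, a detail the paper's proof glosses over, but this does not change the underlying argument.
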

\begin{proof}
 If $\neg(\node_i \dot{<} \node_j) \wedge \neg(\node_j \dot{<} \node_i)$ then $\hash(\node_i)=\hash(\node_j)$. Recalling that $\kappa()$, the function on which $\hash()$ is based on, is a perfect hash function, we prove the thesis by induction. If $\node_i, \node_j$ are leaf nodes, then $\hash(\node_i)=\hash(\node_j)\Leftrightarrow L(\node_i)=L(\node_j)$. If $\node_i, \node_j$ are not leaf nodes, then $\forall l. 1\leq l\leq \rho(\node_i)$~ $\visit{ch_{\node_i}[l]}$ is identical to $\visit{ch_{\node_j}[l]}$ for inductive hypothesis, and then it must be $L(\node_i)=L(\node_j)$ since $\hash(\node_i)=\hash(\node_j)$; therefore $\visit{\node_i}$ is identical to $\visit{\node_j}$. 
  Now we show that if $\visit{\node_i}$ is identical to $\visit{\node_j}$, then $\hash(\node_i)=\hash(\node_j)$ by induction. The base case has already been proved by the equality $\hash(\node_i)=\hash(\node_j)\Leftrightarrow L(\node_i)=L(\node_j)$. By inductive hypothesis $\hash(ch_{\node_i}[m])=\hash(ch_{\node_j}[m])$ for each child $m$ of  $\node_i$ and $\node_j$. Then $\hash(\node_i)=\hash(\node_j)$ and $\neg(\node_i \dot{<} \node_j) \wedge \neg(\node_j \dot{<} \node_i)$. 
\end{proof}
}
Note that, since any ordering between non comparable vertices is equivalent for our goals, we avoid to give a specific ordering. 
If the $\hash()$ values are computed according to a post order visit of the DAG, then the values $\hash(ch_{\node}[l])$ for $1\leq l\leq \rho(\node)$ are already available when computing $\hash(\node)$. Thus the time complexity of the ordering phase of the DAG is $O(|V_G| \rho\log \rho)$ where the term $\rho\log \rho$ accounts for the ordering of the children of each node. 
\subsection{Applying Tree Kernels to DAGs}
If we restrict to the kernels which are going to be presented in this paper, the general formula for graph kernels derived from the ODD framework~\cite{Dasan2012} can be simplified as follows 
    \begin{equation}
        ODD_{K}(G_1,G_2) = \!\!\!\!\!\!\!\!\!\!\displaystyle \sum_{\substack{D_1 \in ODD_{G_1} \\D_2 \in ODD_{G_2}}} \langle \phi^{K}(D_1),\phi^{K}(D_2) \rangle, 
   \label{eq:k}
  \end{equation}
where $\langle \cdot,\cdot\rangle$ is the dot product operator, and $\phi^{K}(D)$ is the explicit feature space projection of the DAG $D$ with respect to the kernel $K$ and \mbox{$ODD_G=\{ODD_G^{v,h} | v \in V_G\}$}. 
{Section~\ref{sec:st} gives an example of an instance of the kernel defined in \eqref{eq:k}.}

\section{Kernels for DAGs} \label{sec:dagkernels}

In Section~\ref{sec:background}, we showed a preprocessing procedure for transforming a graph into a multiset of ordered DAGs. 
In this section, we first recall the ODD$_{ST_h}$ kernel, presenting it in a slightly different way than as it was originally introduced in the paper~\cite{Dasan2012}. 
Then, we describe the original contributions of the paper, i.e. a novel kernel for DAGs, named $\nuovotk$, and a novel weighting scheme for the features which is specifically designed for our setting. 

\subsection{ST kernel for DAGs} \label{sec:st}
 Let us consider $\visit{\node}$, the tree resulting from the visit of $ODD_{G}^{\node}$ starting from the root node $\node$. 
 The visit can be stopped when the tree $\visit{\node}$ reaches a maximum depth $h$. 
 Such tree is referred to as $\limitedvisit{\node}{h}$. 

 As an example of kernel in \eqref{eq:k}, we recall the ODD$_{ST_h}$ kernel~\cite{Dasan2012}.  
 The features of the kernel are $\limitedvisit{\node}{l}$, for each $\node\in V_D$, where $D\in ODD_G$ as defined in the previous section and for each $0 \leq l\leq h$. 
Specifically, any node $v$ of the DAG contributes to the feature vector $\phi(\cdot)$ as $\phi_{\hash(\node)} = \lambda^{\frac{size}{2}}$, where $size=|{\limitedvisit{\node}{l}}|$ for some $l$, and $\hash(\node)$ (we recall that this notation stays for $\hash(\limitedvisit{\node}{l})$) is the function defined by \eqref{eq:orderingfunction}. This weighting scheme for the features is inherited by the $ST$~\cite{Collins2001} kernel and it is motivated by the fact that 
when computing a kernel involving two matching large trees, the value returned by the kernel is very large because not only
the whole trees will match, but all their subtrees will match as well. To correct that, the contribution to the
kernel of a matching tree is down-weighted by $\lambda^{\frac{size}{2}}$, where $0< \lambda \leq 1$.

\textcolor{black}{
In order to demonstrate that the resulting graph kernel is positive semidefinite, we need to prove that our $\phi(\cdot)$ function is well-defined, i.e. it gives the same result when the representation of the input is changed without changing the value of the input.
If two graphs are isomorphic, they generate the same multiset of DAGs (since they are defined over shortest paths).
We know from Theorem~\ref{teo:dagisomorfi} that isomorphic DAGs generate the same visits.
Since the features considered by the ST kernel are subtrees, it directly follows from Theorem~\ref{teo:visits} that the swapping of non comparable vertices in the ordering do not affect the feature space representation of a graph. Thus, we provided a well-defined feature space representation for $ODD_{ST_h}$, from which it follows that the kernel is positive semidefinite.}

\subsection{The $\nuovotk$ Kernel for DAGs} \label{sec:noveltk}

The kernel we introduce in this section enlarges the feature space of the ST kernel, with a modest increase in computational burden, and is referred to as $\nuovotk$. 
In Algorithm~\ref{alg:nuovokernelfeatures} we define a procedure to compute the explicit feature space representation $\phi(\cdot)$ of $\nuovotk$.
\textcolor{black}{Note that this procedure accesses the graph only by means of  $\visit{\node}$ and $\limitedvisit{\node}{l}$, moreover
if two trees $\visit{\node_i}$ and $\visit{\node_j}$ are identical, than also all their subtrees are.
Thus, if two nodes generates the same $\hash(\node_i)=\hash(\node_j)$, then  $\visit{\node_i}=\visit{\node_j}$ and $\limitedvisit{ch_m[\node_i]}{l}=\limitedvisit{ch_m[\node_j]}{l}$ for each $m$ and $l$.
Thus, by Theorem~\ref{teo:visits} the procedure is well defined also in the presence of non-comparable nodes, since the resulting tree visits are the same.}
This proves that the kernel is positive semidefinite. 
\begin{algorithm*}[ht]
\centering
\scriptsize
\begin{algorithmic}[1]
    \STATE {\bfseries Input:} an ordered DAG $D$, the maximum depth of the visit $h$ 
    \FOR{{\bfseries each $\node\in V_D$}}
      \STATE $f = \visit{\node}$
      \STATE $\phi_{\hash(f)} = \phi_{\hash(f)} + \lambda^{\frac{|f|}{2}}$ // add the proper subtree rooted at $\node$ as a feature.
	\STATE \hspace{2.72cm}// if the feature is first encountered, it is assumed $\phi_{\hash(f)}=0$
\FOR{{\bfseries $0\leq l < \min(h,depth(f))$}}
	\FOR{{\bfseries $1\leq j\leq \rho(\node)$}}
 	  \STATE \begin{tikzpicture}[-,thick,scale=0.47, sibling distance=3.25cm]
 	          \node []{$\node$}
		   child {node (lft) []{$\limitedvisit{ch_1[\node]}{l}$}}
		   child {node[xshift=0.1cm]{$\ldots$}}
		   child {node[xshift=0.3cm]{$\limitedvisit{ch_{j-1}[\node]}{l}$}}
		   child {node[]{$\visit{ch_j[\node]}$}}
		   child {node[xshift=-0.3cm]{$\limitedvisit{ch_{j+1}[\node]}{l}$}}
		   child {node[]{$\ldots$}}
		   child {node (rgt)[]{$\limitedvisit{ch_{\rho(\node)}[\node]}{l}$}}	
		  ;
		  \node [left=0.0cm of lft, yshift=0.3cm] {$f^\prime = $};
 	         \end{tikzpicture} 
	  \STATE ~$\phi_{\hash(f^\prime)} = \phi_{\hash(f^\prime)} + \lambda^{\frac{|f^\prime|}{2}}$ // add the subtree $f^\prime$ as a feature.
 	\ENDFOR
     \ENDFOR
    \ENDFOR
    \STATE {\bfseries Output:} $\phi(\cdot)$, the set of features of $D$
\end{algorithmic}
\caption{A procedure for computing the features of the $\nuovotk$~kernel.\label{alg:nuovokernelfeatures}}
\end{algorithm*}
The set of features related to the $\nuovotk$~kernel is a superset of the features of ST and a subset of the features of PT~\cite{Moschitti2006a}. 
Line 8 of Algorithm~\ref{alg:nuovokernelfeatures} depicts a generic feature introduced by $\nuovotk$. 
Given a node $\node$ and an index $j$, the feature is defined as the subtree $\visit{\node}$ where all subtrees rooted at children of $\node$, except for the $j$-th child,
are replaced by a corresponding limited visit of $l$ levels. 
Notice that the feature actually depends on $\node\in V_D$, the index of a child $j$ and a limit $l$ on the depth of the visits.
The function $\pi(f)$ returns the index of the feature $f$ in $\phi(\cdot)$. 
Figure~\ref{fig:nuovokernelfeaturespace} depicts a partial feature space representation of a DAG according to $\nuovotk$. 
While for the ST kernel there is one feature for each $\node\in V_D$, $\nuovotk$~ associates at most 
$(\rho(\node) \cdot h)+1$ features for any $\node\in V_D$. 
\begin{figure}[t] 
\centering
\begin{tikzpicture}[auto,thick,scale=.5]
\tikzstyle{node}=[minimum size=12pt, inner sep=0pt, outer sep=0pt,ball color=blue,circle, text=white]
        \node (s) [node,xshift=2.0cm, yshift=1.3cm] {\textbf{s}} [->] 
                child {
                    node [node, xshift=-0.6cm] {\textbf{d}}[] 
		     child {
			node [node, xshift=-0.0cm]{\textbf{f}}[]
		    }
                }
                child {
		    node [node, ball color=yellow, text=black]{\textbf{v}}[]
  		    child {
			node [node, xshift=0.25cm]{\textbf{a}}[]
		    }
		    child {
		      node [node, ball color=yellow, text=black]{x}[]
		      child {
			node [node]{\textbf{d}}[]
			child {
			  node [node]{\textbf{e}}[]
			}
			child {
			  node [node]{\textbf{f}}[]
			}
		      }
		    }
                   child {
		      node (ei) [node, xshift=0.1cm]{\textbf{e}}[]
		      child {
			node (preterminale) [node, xshift=0.1cm]{\textbf{e}}[]
		      }
		      child {
			node [node, xshift=-0.0cm]{\textbf{f}}[]
			child {
			  node (fogliag) [node, xshift=-0.36cm]{\textbf{g}}[]
			}
		      }
		    }
		    child {
			node (eei) [node, xshift=0.2cm]{\textbf{e}}[]
			child {
			  node [node]{\textbf{g}}[]
			}
		    }
               }
		child {
		  node (b) [node, xshift=0.5cm]{\textbf{b}}[]
		};
	\draw[->] (preterminale) to (fogliag); 
	\node (laba) []{a)}[];
	\draw [red,->,line width=5pt, right=1cm of laba] (7.5,-0.4)--(9.0,-0.4);
		    \node (rf1) [node, right=4cm of s, yshift=-0.5cm]{\textbf{v}}[->]
  		    child {
			node [node, xshift=0.25cm]{\textbf{a}}[]
		    }
		    child {
		      node [node]{\textbf{x}}[]
		      child {
			node [node, xshift=-0.4cm]{\textbf{d}}[]
			child {
			  node [node]{\textbf{e}}[]
			}
			child {
			  node [node]{\textbf{f}}[]
			}
		      }
		    }
                   child {
		      node (ei) [node, xshift=0cm]{\textbf{e}}[]
		      child {
			node [node, xshift=0.1cm]{\textbf{e}}[]
			child {
			  node [node]{\textbf{g}}[]
			}
		      }
		      child {
			node [node, xshift=0.1cm]{\textbf{f}}[]
			child {
			  node [node]{\textbf{g}}[]
			}
		      }
		    }
		    child {
			node (eei) [node, xshift=0.2cm]{\textbf{e}}[]
			child {
			  node [node]{\textbf{g}}[]
			}
		    };
 	  \node (labb) [right=4.35cm of laba]{b)}[];
		    \node (rf2) [node, below=1.70cm of laba, xshift=0.7cm]{\textbf{v}}[->]
		    child {
		      node [node]{\textbf{x}}[]
		      child {
			node [node]{\textbf{d}}[]
			child {
			  node [node]{\textbf{e}}[]
			}
			child {
			  node [node]{\textbf{f}}[]
			}
		      }
		    }
		    ;
      \node (labc) [below=2.8cm of laba]{c)}[]; %
      \node[right=0.7cm of labc, yshift=-0.65cm]{$l=0$}[]; %
      \node [right=3.6cm of labc, yshift=-0.65cm]{$l=1$}[];
      \node [right=1.6cm of labc]{d)}[];
		    \node (rf3) [node, below=3.1cm of s, xshift=1.6cm]{\textbf{v}}[->]
  		    child {
			node [node, xshift=0.25cm]{\textbf{a}}[]
		    }
		    child {
		      node [node]{\textbf{x}}[]
		      child {
			node [node]{\textbf{d}}[]
			child {
			  node [node]{\textbf{e}}[]
			}
			child {
			  node [node]{\textbf{f}}[]
			}
		      }
		    }
                   child {
		      node (ei) [node, xshift=0cm]{\textbf{e}}[]
		    }
		    child {
			node (eei) [node, xshift=0.2cm]{\textbf{e}}[]
		    };
      \node[right=5.1cm of labc]{e)}[];
      \node[right=7.0cm of labc, yshift=-0.65cm]{$l=2$}[];
		    \node (rf4) [node, right=3.0cm of rf3]{\textbf{v}}[->]
  		    child {
			node [node, xshift=0.25cm]{\textbf{a}}[]
		    }
		    child {
		      node [node]{\textbf{x}}[]
		      child {
			node [node]{\textbf{d}}[]
			child {
			  node [node]{\textbf{e}}[]
			}
			child {
			  node [node]{\textbf{f}}[]
			}
		      }
		    }
                   child {
		      node (ei) [node, xshift=0cm]{\textbf{e}}[]
		      child {
			node [node, xshift=0.1cm]{\textbf{e}}[]
		      }
		      child {
			node [node, xshift=0.1cm]{\textbf{f}}[]
		      }
		    }
		    child {
			node (eei) [node, xshift=0.2cm]{\textbf{e}}[]
			child {
			  node [node]{\textbf{g}}[]
			}
		    };
\end{tikzpicture}
  \caption{Feature space representation related to the kernel $\nuovotk$: a) an input DAG; b) the proper subtree rooted at the node labelled as \textbf{v}; c)-e) given the child \textbf{x} of \textbf{v}, the features related to visits limited to $l$ levels. \label{fig:nuovokernelfeaturespace}}
\end{figure}
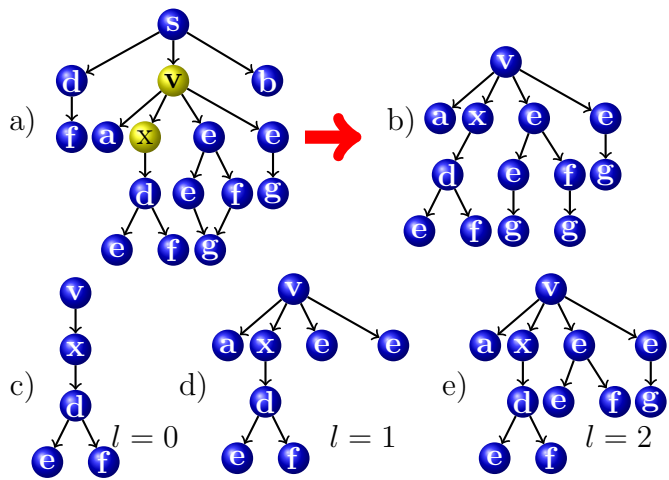 
For each node $\node\in V_D$, for example the node with label {\bf v} highlighted in Figure~\ref{fig:nuovokernelfeaturespace}-a, the algorithm inserts 
the following features:

\begin{enumerate}
 \item the proper subtree rooted at $\node$, which in our example is the one in  Figure~\ref{fig:nuovokernelfeaturespace}-b; 
 \item given $ch_j[\node]$, the subtree composed by: 
  \begin{itemize}
    \item $\node$;
    \item the proper subtree rooted at the $j$-th child of $\node$; 
    \item the subtrees resulting from a visit limited to $1\leq l\leq h$ levels starting from the other children of $\node$ 
  \end{itemize} 
  is added as feature. As $l$ ranges from $0$ to $h$, the features/subtrees from  \mbox{Figure~\ref{fig:nuovokernelfeaturespace}-c} to \mbox{Figure~\ref{fig:nuovokernelfeaturespace}-e} are added. 
\end{enumerate}
%
Recalling that $H$ is the number of nodes in a DAG ODD$_G^\node$, the complexity of Algorithm~\ref{alg:nuovokernelfeatures} is $O(Hh^2\rho^2\log \rho)$. 
The complexity of the ODD kernel in \eqref{eq:k}, instantiated with $\nuovotk$~as base kernel is $O(|V_G|\log |V_G|)$\textcolor{black}{, assuming $\rho$ constant}.
%
%
\subsection{A Novel Feature Weighting Scheme} \label{sec:tanh}
{
The features associated with many kernels for graphs, including $\text{ODD}_{ST_h}$ and ODD$_{\nuovotk}$, are not independent from each other. 
They are, instead, organized in a hierarchical structure~\cite{Yanardag2014}. 
Let us consider the $\text{ODD}_{ST_h}$ kernel as an example: given any pair $t_i, t$ such that $t_i$ is a subtree of $t$, if $t$ occurs as a feature for a graph $G$, then $t_i$ must occur as features as well. 
As a consequence, sticking to our example, there is a monotonic increasing relationship between the frequencies of the subtree features $t_i$ and the subtree features $t$. 
Such relationship is quantified in the upper-left plot of Figure~\ref{fig:featfreq}, which reports the frequencies of the features generated by the $ODD_{ST_h}$ kernel, for $h\in\{0,\ldots,3\}$, on one of the datasets we will consider in Section~\ref{sec:exps}. 
The points in the $x$-axis correspond to features, sorted according to their weights. 
The y-axis, since $\lambda=1$, reports the frequencies of the features in the dataset, i.e. the number of times each feature appears in all input graphs. 
Note that the $x$-axis is in logarithmic scale. 
The frequencies are distributed according to a Zipfian distribution, which means that there are very few features with high frequency. 
Given the structured nature of the feature space, such features are the ``simple'' ones, i.e. those associated with small sized subtrees, for example single nodes. 
Any kernel function evaluation will then be highly influenced by such features, which are typically the least discriminative ones. 
In the case of the $\text{ODD}_{ST_h}$ and ODD$_{\nuovotk}$ kernels, which we recall first decompose the graph into a set of DAGs, the difference between the frequencies of small-sized and large-sized features is even greater since they are extracted from multiple DAGs: the smaller the size of a subtree, the more likely for it to appear in multiple DAGs. 
The fact that the distribution of weights of the features is particularly skewed, may negatively impact the predictive performance of the kernel since, in principle, we would like to give more emphasis to (i.e. to weight more) bigger, discriminative features with respect to small ones, that tend to appear in almost all examples, and thus are generally not correlated with the target concept. 
}
%
%
\begin{figure}[t]
{
	\centering
	\includegraphics[width=1.0\columnwidth]{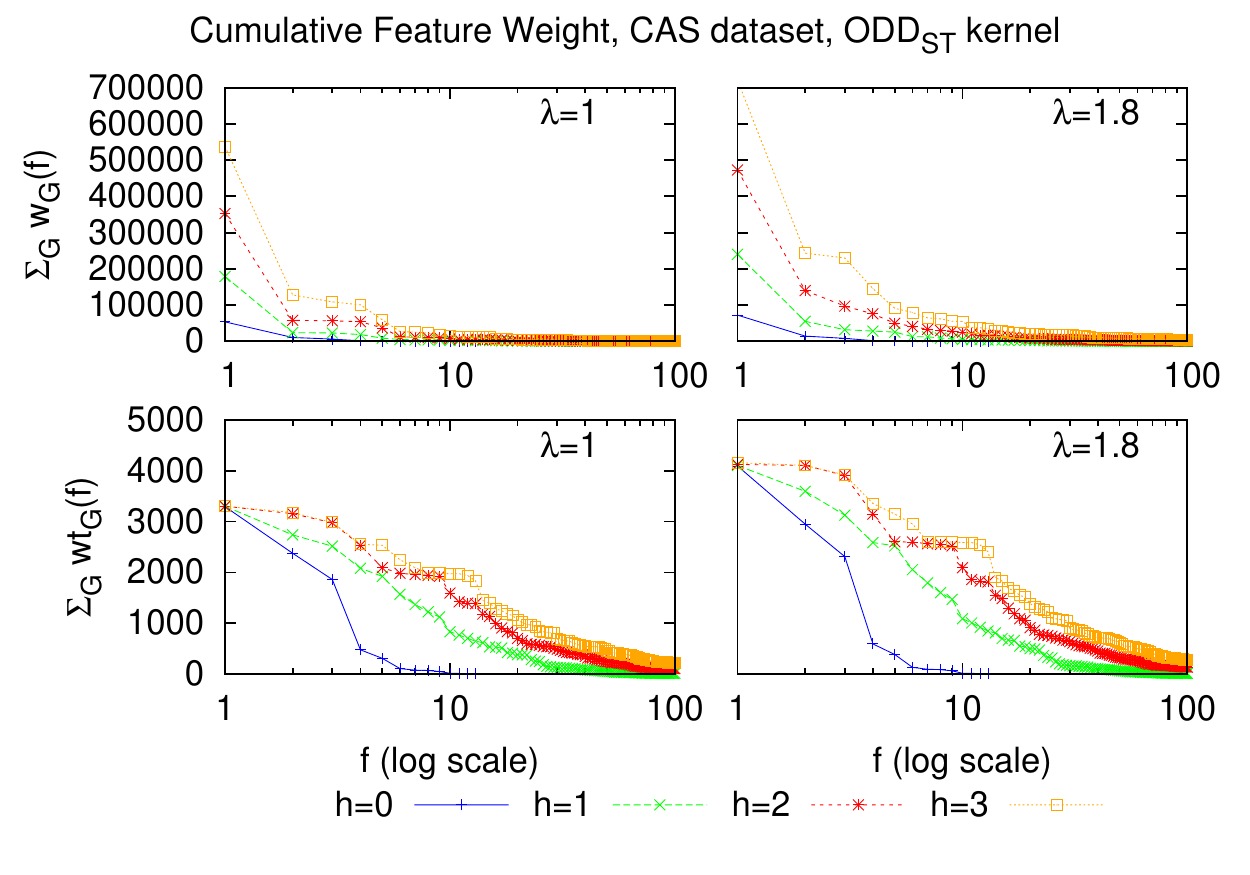}
	\caption{Comparison between the weighting schemes $w_G(f)$~\eqref{eq:lambda} and $wt_G(f)$~\eqref{eq:tanh}. On the $x$ axis, in a logarithmic scale, the first 100 features generated by the ODD$_{ST_h}$ kernel for different $h$ values.
	The $y$ axis reports the cumulative weight of each feature among all the graphs in the dataset.\label{fig:featfreq}}
}
\end{figure}

{One way to tackle this issue is to adopt the weighting scheme explained in Section~\ref{sec:st}, that has been designed specifically for the case of the computation of tree kernels~\cite{Collins2001}.}
{This scheme has been implemented in the original $ODD_{ST_h}$ kernel formulation, and we maintained it for the proposed $ODD_\nuovotk$ kernel: given  a graph $G$, the weight $w_G(f)$ of each feature $f$ (see lines 4 and 9 of Algorithm~\ref{alg:nuovokernelfeatures}) is computed as }
 \begin{equation}
w_G(f)=freq_G(f) \cdot \lambda^\frac{|f|}{2},
\label{eq:lambda}
 \end{equation}
  where $freq_G(f)$ is the frequency of the feature $f$ in $G$. 
  Therefore the contribution to the kernel of the same matching feature (computed via dot product) in two input graphs $G_1$ and $G_2$ is $freq_{G_1}(f)\cdot freq_{G_2}(f) \cdot \lambda^{|f|}$. 
{A value of $\lambda$ greater than $1$ would give more importance to large matching trees. 
However, the contribution of the less frequent, possibly interesting, small features could be underweighted. } 
{The upper-right plot in Figure~\ref{fig:featfreq} shows that, with this weighting approach, there are slightly more features with a relatively high weight w.r.t. the case where no weighting scheme is applied (i.e. when $\lambda=1$). Nonetheless, the distribution is still very skewed. }\\
Another possibility is to define a different weighting scheme, more suited to our approach.
\textcolor{black}{As a first step in this direction,} we propose to mitigate the contribution of \textcolor{black}{otherwise} overweighted features 
with a different definition\footnote{This is an evolution of the scheme proposed in~\cite{DaSanMartino2015}.} of $w_G(f)$,
{in the following denoted as $wt_G(f)$, i.e.
\begin{equation}
    wt_G(f)=\tanh(\lambda^{|f|})\cdot \tanh(freq_G(f)), 
   \label{eq:tanh}
\end{equation}
\noindent where $\tanh(\cdot)$ is the hyperbolic tangent function.}
%
\textcolor{black}{Note that the original weighting scheme depends nonlinearly (exponentially) on the size of the feature $|f|$ and linearly on its frequency.
The novel scheme we are proposing, on the other hand, depends nonlinearly on both $|f|$ and $freq_G(f)$.
In this way, the contribution of each feature is smoothly and non-linearly normalized in the interval $[0,1]$.}\\
Note that the hyperbolic tangent function is almost linear around zero, and asymptotically tends to one for positive values. 
This means that the contribution of frequent features is truncated, while the less frequent features are still discriminated since they fall in the linear part of the function. 
{The same is true for the $\lambda^{|f|}$ factor.}

{The lower plots in Figure~\ref{fig:featfreq} reports the weights distribution according to the new $wt_G$ weighting function proposed in~\eqref{eq:tanh} with $\lambda=1$ and $\lambda=1.8$, respectively. The final result is that the weights are distributed in a smoother way.}\\
The new weighting scheme is applied to the ST kernel, obtaining a variant of the kernel proposed in \cite{Dasan2012}, and to the ST+ kernel first proposed in this paper. 
{Note that this novel weighting scheme is just one possibility among several ones. The key point is that we want to achieve a smoother distribution of the weights associated to the features.}
{The tanh function implements all our desiderata, but any other sigmoidal function can be adopted. Notwithstanding the heuristic
nature of our choice, the experimental results we have obtained on several real world datasets, as reported in Section~\ref{sec:exps}, show
that the novel proposed weighting scheme allows to reach statistically significant improvements over state-of-the-art kernels. This seems to
confirm that both our intuition on the smoothness of the weight distribution, as well as its implementation via the tanh function, are useful.} 

\section{Related work} \label{sec:relatedwork}

Graph data is usually high-dimensional. For this reason, in order to perform learning on graph datasets, there are two possible approaches: 
\begin{enumerate}
  \item applying a preprocessing phase aimed at selecting possibly relevant features; 
  \item in the context of kernel methods, using tractable kernel functions. 
\end{enumerate}

Generally speaking, the methods following the first approach extract frequent patterns, build a vectorial representation of the graphs according to such patterns and then apply a kernel method. When the kernel method is an SVM, the approach is referred to as SVM with frequent pattern mining (freqSVM). The techniques for extracting the features include Gaston~\citep{Kazius2006}, Correlated Pattern Mining (CPM)~\citep{DBLP:conf/pkdd/BringmannZRN06}, MOLFEA~\citep{Helma2004}. 
Saigo et al.~\citep{Saigo2009} proposed gBoost, a 
boosting method that progressively collects informative (according to the target output) patterns.
 
The second approach includes a set of kernel functions for graphs. 
The Marginalized Graph Kernel (MGK) considers common walks as features~\citep{Kashima2003} 
(the work has been extended in order to make it more efficient and effective in~\citep{Mah'e2004}). 
Informally, this kernel is defined as the expected value of a kernel over all possible pairs of label sequences generated by random walks on two graphs. The worst case time complexity of the algorithm presented in~\citep{Vishwanathan2006} is $O(|V_G|^3)$. 

The Shortest Path Kernel associates a feature to each pair of nodes of one graph. The value of the feature is the length of the shortest path between the corresponding nodes in the graph~\citep{Borgwardt2005}. The complexity of the kernel is $O(|V|^4)$. 
Being the Shortest Path Kernel based on paths, it can be represented as an instance of \eqref{eq:k}. 
{We do not report experimental results about this kernel because of its high computational complexity, and its inferior results compared to other state of the art kernels on many of the datasets considered in this paper~\citep{Shervashidze2009a,Shervashidze2011}.}

In \citep{Heinonen2009} it is described an effective method for computing path based kernels. First a graph is decomposed into a set of trees of totally $t$ nodes. Then the Burrows-Wheeler transform is employed for fast and space-efficient enumeration of paths. The complexity of the kernel is $O(t\log t^\epsilon)$, with $\epsilon<1$. 
{The \textit{graphlet} kernel~\citep{Shervashidze2009}  counts all types of matching subgraphs of small size $k$ (e.g. $k=3,4$ or $5$). There are efficient schemes for computing this kernel, but they are applicable only on unlabeled graphs. For the labeled case, the computational complexity of this kernel is $O(n^k)$. {In the experimental section of this paper, we considered the Graphlet kernel instantiated with $k=3$, that will be referred as {3-Graphlet} kernel.}}

{The Weisfeiler-Lehman Fast Subtree kernel (FS) counts the number of identical subtree patterns obtained by subtree-walks up to height $h$~\citep{Shervashidze2009a,Shervashidze2011}. The complexity of the kernel is $O(|E|h)$. 
While being fast to compute, the kernel may lack of expressiveness for some tasks given that the number of non-zero features generated by one graph is at most $|V|h$. 
Note that the subtree-walks extracted by the kernel differ from the tree structures extracted by the kernels proposed in Section~\ref{sec:dagkernels}: in FS a node usually appears multiple times in the same subtree-walk, while in the ODD kernel only DAG nodes which have multiple incoming edges appear multiple times in the extracted tree structures. 
Such difference makes the Weisfeiler-Lehman Fast Subtree kernel not reproducible from \eqref{eq:k}; a discussion on the differences between the feature spaces induced by the Weisfeiler-Lehman Fast Subtree and the $ODD_{ST_h}$ kernels can be found in \cite{Dasan2012}. 
Moreover, the features of the FS kernel are  subtree-walks, while specific features (as explained in Sections~\ref{sec:st} and~\ref{sec:noveltk}) are extracted from the tree-visits obtained from the $ODD_{ST_h}$ and $ODD_{ST+}$ kernels. 
}

Costa and De Grave \citep{Costa2010} extended the Fast Subtree Kernel by computing exact matches between pairs of subgraphs with controlled size and distance. 
{Their kernel, named Neighborhood Subgraph Pairwise Distance Kernel (NSPDK)}, has $O(|V| |V_h| |E_h| \log |E_h|)$ time complexity, where $|V_h|$ and $|E_h|$ are the number of nodes and the number of edges of the subgraph obtained by a breadth-fist visit of depth $h$. 
The authors state that, for small values of the subgraph size and distance, the complexity of the kernel becomes in practice linear.

{The {Weisfeiler-Lehman} Shortest path Kernel proposed in \citep{Shervashidze2011} is similar in spirit to the NSPDK kernel. Indeed, it considers pairs of subtree patterns and their distance. However it does not limit the maximum distance between the considered patterns, resulting in a computational complexity of $O(n^4)$.
}
Mah\'e and Vert \citep{Mah'e2009} described a graph kernel based on extracting tree patterns from the graph. The difference with the approach of this paper is that the tree patterns are obtained as result of walks on the graph, i.e. the same node can appear more than once in the same tree pattern. The complexity of the kernel is $O(|V_1| |V_2| h\rho^{2\rho} )$, where $h$ is the depth of the visit.  
{Finally, \citep{Gauzere2015} proposed the treelet kernel, based on frequent pattern mining of tree-substructures. The kernel implementation considers subtrees with a maximum of $6$ nodes, and its computational complexity is $O(n\rho^5)$. }
\begin{table}[t]
\centering
{
  \begin{tabular}{|l|c|}
  \hline
  Kernel & Complexity  \\
  \hline
  RW~\citep{Kashima2003}  & $O(|V|^3)$ \\
  SP~\citep{Borgwardt2005} & $O(|V|^4)$  \\
  WL-SP~\citep{Shervashidze2011} & $O(|V|^4)$  \\
  3-Graphlet~\citep{Shervashidze2009} & $O(|V|^3)$  \\
  Treelet~\citep{Gauzere2015} & $O(|V|\rho^5)$ \\
  FS~\citep{Shervashidze2009a,Shervashidze2011} & $O(|E|h)^{*}$  \\
  NSPDK~\citep{Costa2010}  & $O(|V|)^{*,**}$  \\
  \hline
  ODD$_{ST}$~\cite{Dasan2012} & $ O(|V| log |V|)^* $ \\
   ODD$_{ST+}$& $O(|V| log |V|)^*$\\
    \hline
  \end{tabular}
  \caption{Computational complexity of the Shortest Path, the 3-Graphlet, the fast Subtree, the NSPDK, the ODD$_{ST}$ and ODD$_{ST+}$ kernels. *: considering $\rho$ constant; **: with high constants.\label{tab:kernelssummary}}
 }
\end{table}
\begin{figure}[ht]
{
	\centering
	\includegraphics[width=0.85\columnwidth]{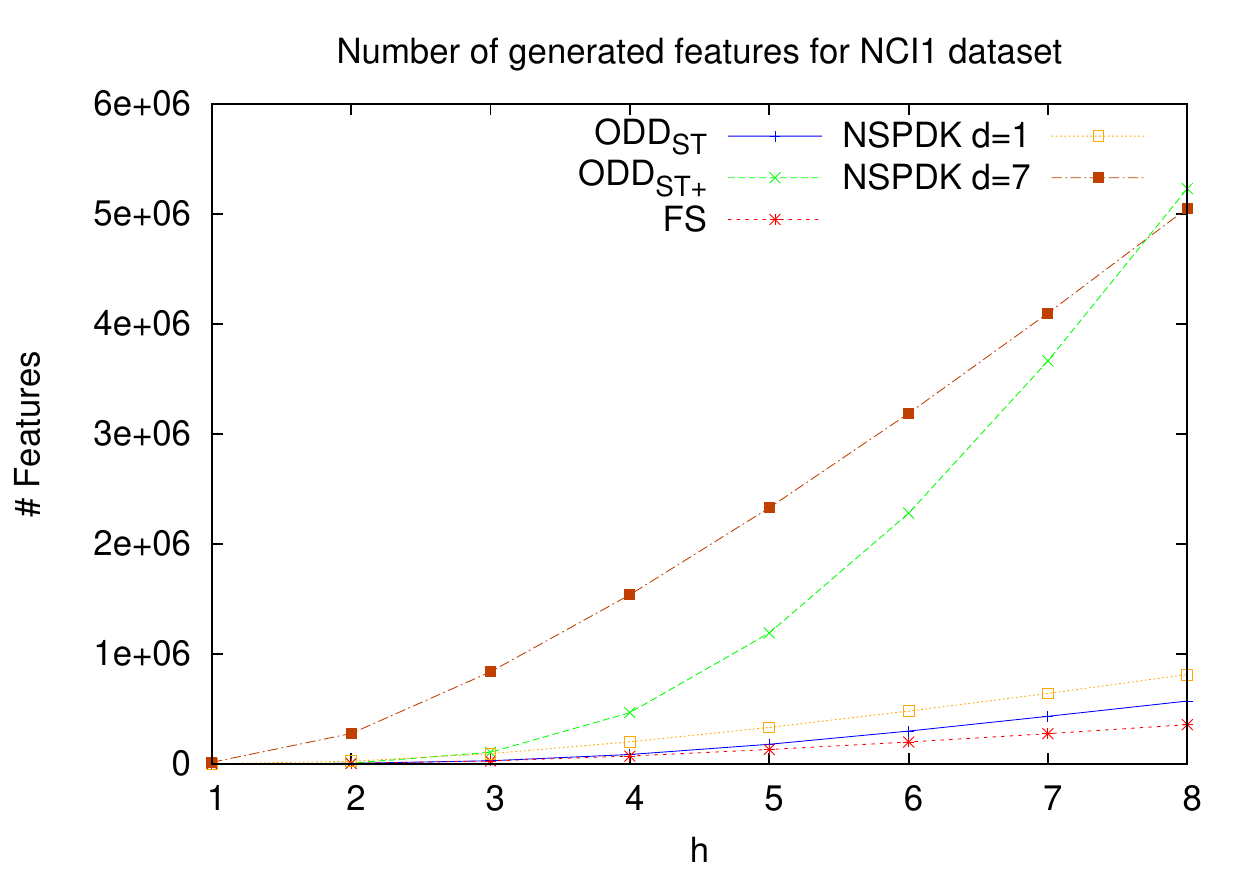}
	\caption{{Number of features generated by the $ODD_{ST_{h}}$, $ODD_{ST_{+}}$, FS and NSPDK kernels on the NCI1 dataset as a function of their parameter $h$.}\label{fig:nfeatures}}
}
\end{figure}
{Table~\ref{tab:kernelssummary} summarizes the computational complexity of some of the kernels cited in this section, and the ones proposed in this paper.
}
{Moreover, just to give an idea about how many features are generated by a graph kernel on a real-world dataset, in Figure~\ref{fig:nfeatures} we have reported the number of different features generated on a chemical dataset (NCI1) by the most efficient aforementioned kernels.}
\section{Experimental results}\label{sec:exps}
\subsection{{Experiments on common benchmark graph datasets}}\label{expsold}
\begin{table}[t]
\center
\begin{tabular}{|l|c|c|c|c|}
\hline
Dataset	&	graphs	& 	pos(\%)	&	avg nodes	& avg edges \\
\hline
CAS	&	$4337$	&	$55.36$	&	$29.9$		& $30.9$	\\
CPDB	&	$684$	&	$49.85$	&	$14.1$		& $14.6$	\\
AIDS	&	$1503$	&	$28.07$	&	$58.9$		& $61.4$	\\
NCI1	&	$4110$	&	$50.04$	&	$29.9$		& $32.3$	\\
NCI109 & $4127$ & $50.37$ & $29.7$ & $32.1$\\
GDD	&	$1178$	&	$58.65$	&	$284.3$	& $2862.6$	\\		
\hline
MSRC\_9 & 221 & multi-class & 40.6 & 97.9 \\
MSRC\_21& 563& multi-class & 77.5 & 198.3 \\
\hline
NCI123 & 40952 & 4.76 & 26.8 & 28.9 \\
NCI\_AIDS & 42682 & 3.52 & 45.7 & 47.7 \\
\hline
\end{tabular}
\caption{Statistics of CAS, CPDB, AIDS, NCI1 , NCI109, GDD, MSRC\_9, MSRC\_21, NCI123 and NCI\_AIDS datasets: number of graphs, percentage of positive examples, average number of atoms, average number of edges.\label{tab:datasets}}
\end{table}
The experimental assessment of the proposed kernels has been performed on a total of eight datasets.  The first six datasets involve chemo and bioinformatics data: CAS\footnote{http://www.cheminformatics.org/datasets/bursi},
CPDB~\cite{Helma2004}, AIDS~\cite{Weislow1989}, NCI1, NCI109~\cite{springerlink:10.1007/s10115-007-0103-5} and GDD~\cite{dobson2003}. 
The first five datasets involve chemical compounds and represent binary classification problems. The nodes are labeled according to the atom type and the edges represent the bonds. 
GDD is a dataset composed by proteins represented as graphs, where the nodes of the graphs represent amino acids and two nodes are connected by an edge if they are less than 6{\AA} apart. 
\textcolor{black}{Moreover, we adopted from \cite{Birlinghoven} two real-world image datasets: MSRC9-class and MSRC21-class\footnote{\mbox{http://research.microsoft.com/en-us/projects/ObjectClassRecognition/}}. Each image is represented by its conditional Markov random field graph enriched with semantic labels, and the task is scene classification. Both the datasets are multi-class single-label classification problems.
For our experiments, we adopted a SVM classifier~\cite{Pedregosa2012}. For the multi-class problems, we adopted a  one-vs-one scheme.}
\begin{table}[p]
\begin{center}
\begin{tabular}{|l|c|c|c|c|}
\hline
\scriptsize $Kernel$ 		& \scriptsize  CAS 		& \scriptsize CPDB 		& \scriptsize AIDS~~ 		& \scriptsize NCI1~~  \\ 
\hline
\scriptsize $p$-random walk & 70.16{\bfseries *} (8) \vspace{-0.1cm}& 64.14{\bfseries *} (8) & 73.55{\bfseries *} (8)  & -  
\vspace{-0.1cm}\\
 &  {\tiny $\pm 0.20$}& {\tiny $\pm 1.35$} & {\tiny$\pm 0.49$ }& {\tiny$\pm -$} \\
 \scriptsize Graphlet & 71.10{\bfseries *} (7) \vspace{-0.1cm}& 67.36{\bfseries *} (7) & 73.98{\bfseries *} (7)  & 69.68{\bfseries *} (7) 
\vspace{-0.1cm}\\
 &  {\tiny $\pm 0.48$}& {\tiny $\pm 0.96$} & {\tiny$\pm 0.65$ }& {\tiny$\pm 0.52$} \\
\scriptsize FS 	&  83.32{\bfseries *} (6)	&  76.36 (5) & 82.02~~(5)  &  84.41 (4) \vspace{-0.1cm}\\ 
 & {\tiny $\pm 0.37$} & {\tiny $\pm 1.48$} & {\tiny $\pm 0.4$} & {\tiny $\pm 0.49$} \\
\scriptsize NSPDK			& 83.60{\bfseries *} (2)	& {\bfseries 76.99}	(1)  	& {\bfseries 82.71} (1) 	&83.45  (5) \vspace{-0.1cm}\\ 
& {\tiny $\pm 0.34$} &  {\tiny $\pm 1.15$}&   {\tiny $\pm 0.66$} & {\tiny $\pm 0.43$} \\
\scriptsize ODD$_{\text{ST}_h}$		& 83.34{\bfseries *} (4)	& 76.44  (4)  	& 81.51 (6) 	& 82.10{\bfseries *} (6)\vspace{-0.1cm}\\ 
& {\tiny $\pm 0.31$}  &  {\tiny $\pm 0.62$} & {\tiny $\pm 0.74$} & {\tiny $\pm 0.42$}  \\
\scriptsize ODD$_{\text{ST}_h}^{\text{\tiny{TANH}}}$		& {83.40}{\bfseries *} (3)	& 76.56 (3)  	& 82.51  (3) 	& 84.57  (3)	 \vspace{-0.1cm}\\
& {\tiny $\pm 0.41$} & {\tiny $\pm 0.97$} & {\tiny $\pm 0.52$} & {\tiny $\pm 0.43$} \\
\scriptsize ODD$_{\text{ST}_+}$		&  {\bfseries 83.90} (1)	& 76.30  (6)  	& 82.06  (4) 	& {\bfseries 84.97} (1)\vspace{-0.1cm}\\
& {\tiny $\pm 0.33$} & {\tiny $\pm 0.23$} & {\tiny $\pm 0.70$} & {\tiny $\pm 0.47$}  \\
\scriptsize ODD$_{\text{ST}_+}^{\text{\tiny{TANH}}}$		& {83.33}{\bfseries *} (5)	& 76.74 (2)  	& 82.54 (2) 	& 84.81  (2)	\vspace{-0.1cm}\\
& {\tiny $\pm 0.34$} & {\tiny $\pm 1.81$} & {\tiny $\pm 0.75$} & {\tiny $\pm 0.41$}\\
\hline
\hline
\scriptsize $Kernel$  	& \scriptsize GDD & \scriptsize NCI109 & \scriptsize  MSRC\_9 		& \scriptsize MSRC\_21 \\ 
\hline
\scriptsize $p$-random walk & - \vspace{-0.1cm}& - & 67.01{\bfseries *} (7)  & 18.88{\bfseries *} (8) 
\vspace{-0.1cm}\\
 &  {\tiny $\pm -$}& {\tiny $\pm -$} & {\tiny$\pm 2.22$ }& {\tiny$\pm 1.4$} \\
\scriptsize 3-Graphlet & 74.92 (6) \vspace{-0.1cm}& 68.07{\bfseries *} (7) & 60.83{\bfseries *} (8)  & 19.66{\bfseries *} (7) 
\vspace{-0.1cm}\\
 &  {\tiny $\pm 1.40$}& {\tiny $\pm 0.31$} & {\tiny$\pm 2.0$ }& {\tiny$\pm 0.96$} \\
\scriptsize FS & 75.46 (3) \vspace{-0.1cm}& {\bfseries85.02} (1) & 89.26{\bfseries *} (6)  & 89.87 (6) 
\vspace{-0.1cm}\\
 &  {\tiny $\pm 0.98$}& {\tiny $\pm 0.44$} & {\tiny$\pm 0.82$ }& {\tiny$\pm 0.71$} \\
\scriptsize NSPDK		& $74.09$  (7) \vspace{-0.1cm}& $84.17$ (2) & $89.48${\bfseries *} (4) & $90.24$ (3) \vspace{-0.1cm}\\ 
& {\tiny $\pm 0.91$} & {\tiny $\pm 0.33$} & {\tiny$\pm 1.0$ }& {\tiny$\pm 0.49$}\\
\scriptsize ODD$_{\text{ST}_h}$	& $75.27$  (5) & $81.91${\bfseries *} (6) & $90.80$ (3)  & $89.92$ (5) \vspace{-0.1cm}\\ 
& {\tiny $\pm 0.68$} & {\tiny $\pm 0.42$} & {\tiny $\pm 1.10$} & {\tiny$\pm 0.73$} \\
\scriptsize ODD$_{\text{ST}_h}^{\text{\tiny{TANH}}}$	& {\bfseries 76.09}  (1) \vspace{-0.1cm}& 83.68 (4)  & \textbf{94.39} (1)  & \textbf{92.60 }(1) \vspace{-0.1cm}\\
 &{\tiny $\pm 0.85$}& {\tiny $\pm 0.39$} & {\tiny$\pm 1.21$} & {\tiny$\pm 0.45$}\\
\scriptsize ODD$_{\text{ST}_+}$	& $75.33$  (4) \vspace{-0.1cm}& $83.08${\bfseries *} (5)  & 89.33{\bfseries *} (5) & 89.94 (4) \vspace{-0.1cm}\\
 & {\tiny $\pm 0.81$}& {\tiny $\pm 0.49$}& {\tiny$\pm 1.2$} & {\tiny$\pm 0.80$ } \\
\scriptsize ODD$_{\text{ST}_+}^{\text{\tiny{TANH}}}$		& $75.52$ (2) \vspace{-0.1cm}& 83.93 (3) & 92.99 (2)  & 91.74 (2) \vspace{-0.1cm}\\
& {\tiny $\pm 0.88$} & {\tiny $\pm 0.42$} & {\tiny$\pm 1.26$} & {\tiny$\pm 0.77$}\\
\hline
\end{tabular}
\end{center}
 \caption{Average accuracy results $\pm$ standard deviation in nested 10-fold cross validation for the $p$-random walk, the Graphlet, the Fast Subtree, the Neighborhood Subgraph Pairwise Distance, the $ODD_{ST_h}$, the  ODD$_{\text{ST}_h}^{\text{\tiny{TANH}}}$, the $ODD_{\nuovotk}$ and the ODD$_{\text{ST}_+}^{\text{\tiny{TANH}}}$ kernels on CAS, CPDB, AIDS, NCI1, GDD, NCI109, MSRC\_9 and MSRC\_21  datasets. The rank of the kernel is reported between brackets. {The symbol {\bfseries *} denotes the kernels whose performance difference with respect to the top-ranked kernel is statistically significant}.  \label{tab:nestedkfoldresults}}
\end{table}
We compare the predictive abilities of the $ODD_{ST_{+}}$ kernel and the two proposed variants $ODD_{\text{ST}_h}^{\text{\tiny{TANH}}}$ and $ODD_{\text{ST}_+}^{\text{\tiny{TANH}}}$ to the original $ODD_{ST_h}$ kernel~\cite{Dasan2012}, the Fast Subtree Kernel (FS)~\cite{Shervashidze2009a} and the Neighborhood Subgraph Pairwise Distance Kernel (NSPDK)~\cite{Costa2010}. 
{Moreover, we also report the performances of the $p$-random walk kernel, that is a kernel that compares random walks up to length $p$ in two graphs (special case of~\citep{Kashima2003} and~\citep{Mah'e2004} ) as representative for the family of kernels based on random walks, and the \textit{graphlet} kernel~\citep{Shervashidze2009}.
Note that the complexity of the \textit{graphlet} kernel (when applied to labeled graphs) is exponential in the size $k$ of the \textit{graphlet}. Because of that, following \cite{Shervashidze2011}, we restricted our experimentation to a value of $k$ that allows for an efficient computation of the kernel, i.e. $k=3$. 
}

The experiments are performed using a \textit{nested} 10-fold cross validation: for each of the 10 folds another \textit{inner} 10-fold cross validation, in which we select the best parameters for that particular fold, is performed. 
All the experiments have been repeated $10$ times using different splits for the cross validation, and the average results (with standard deviation) are reported. 
For all the experiments, the values of the parameters of the $ODD_{ST_h}$ and $ODD_{\nuovotk}$~kernels,  { including their variants using \textit{tanh}}, have been restricted to: $\lambda=\{0.1, 0.2, \ldots, 2.0\}$, $h=\{1, 2, \ldots, 10\}$. 
For the Fast Subtree kernel the only parameter $h=\{1, 2, \ldots, 10\}$ is optimized.
For the NSPDK, the parameters $h=\{1, 2, \ldots, 8\}$ and $d=\{1, 2, \ldots, 7\}$ are optimized. 
{Finally, for the $p$-random walk kernel we selected $p=\{1, 2, \ldots, 10\}$, and for the \textit{graphlet} kernel we considered only the graphlets of size $3$, as mentioned above.} 
{A 10x10 CV test with confidence level 95\% (and 10 degrees of freedom) has been executed between each pair of kernels on all datasets \cite{Japkowicz:2011:ELA:1964882}. In the following the term significant will refer to this statistical test.}  
Table~\ref{tab:nestedkfoldresults} reports the average accuracies and the rankings obtained by the different kernels on the considered datasets.
{The symbol {\bfseries *} in Table~\ref{tab:nestedkfoldresults} denotes, for each dataset, the kernels whose performance difference with respect to the top-ranked kernel is statistically significant.} 

Let us now focus on the experimental results obtained for the six chemical datasets.
The kernels $ODD_{\text{ST}_h}^{\text{\tiny{TANH}}}$,  $ODD_{\nuovotk}$, $ODD_{\text{ST}_+}^{\text{\tiny{TANH}}}$ together have best accuracy on three out of six datasets, and the second best accuracy on two others. 
{On the datasets in which the FS and NSPDK kernels perform better than the ODD ones, i.e. CPDB, AIDS and NCI109, the performance difference, at least with respect to the best performing ODD kernel, is never significant. 
Note that $ODD_{\nuovotk}$ performs significantly better than NSPDK and FS on the CAS dataset.} 
The variant employing the hyperbolic tangent is always useful for the ST kernel, making it the best performing kernel on GDD, and is able to boost the accuracy performance of $ODD_{\nuovotk}$ on AIDS, CPDB , GDD and NCI109 datasets. 
The generally good results of the ODD kernels, with respect to FS and NSPDK, may be attributed to the fact that they have associated a large feature space, which makes them more adaptable to different tasks. 
{Note that the execution of $p$-random walk kernel did not complete in 4 days for NCI1, NCI109 and GDD datasets, so the results are missing.}

\textcolor{black}{Let us now focus on the image datasets (MSRC\_9 and MSRC\_21).
On these datasets, the baselines FS, NSPDK, $ODD_{\text{ST}_h}$ kernels and the proposed $ODD_{\nuovotk}$ kernel show very similar performances.
On these datasets, the introduction of the hyperbolic tangent weighting scheme is very beneficial. Both $ODD_{\text{ST}_h}^{\text{\tiny{TANH}}}$ and $ODD_{\text{ST}_+}^{\text{\tiny{TANH}}}$ performs better than all the baselines, with the former being the best performing kernel on both datasets. }\\
{The $p$-random walk kernel and the \textit{graphlet} kernel show poor performances on these datasets. We argue that this is because they are the only ones among the considered kernels that do not consider all the neighbors of a node as a feature.
}
\begin{figure}
	\centering
	\includegraphics[width=0.8\columnwidth]{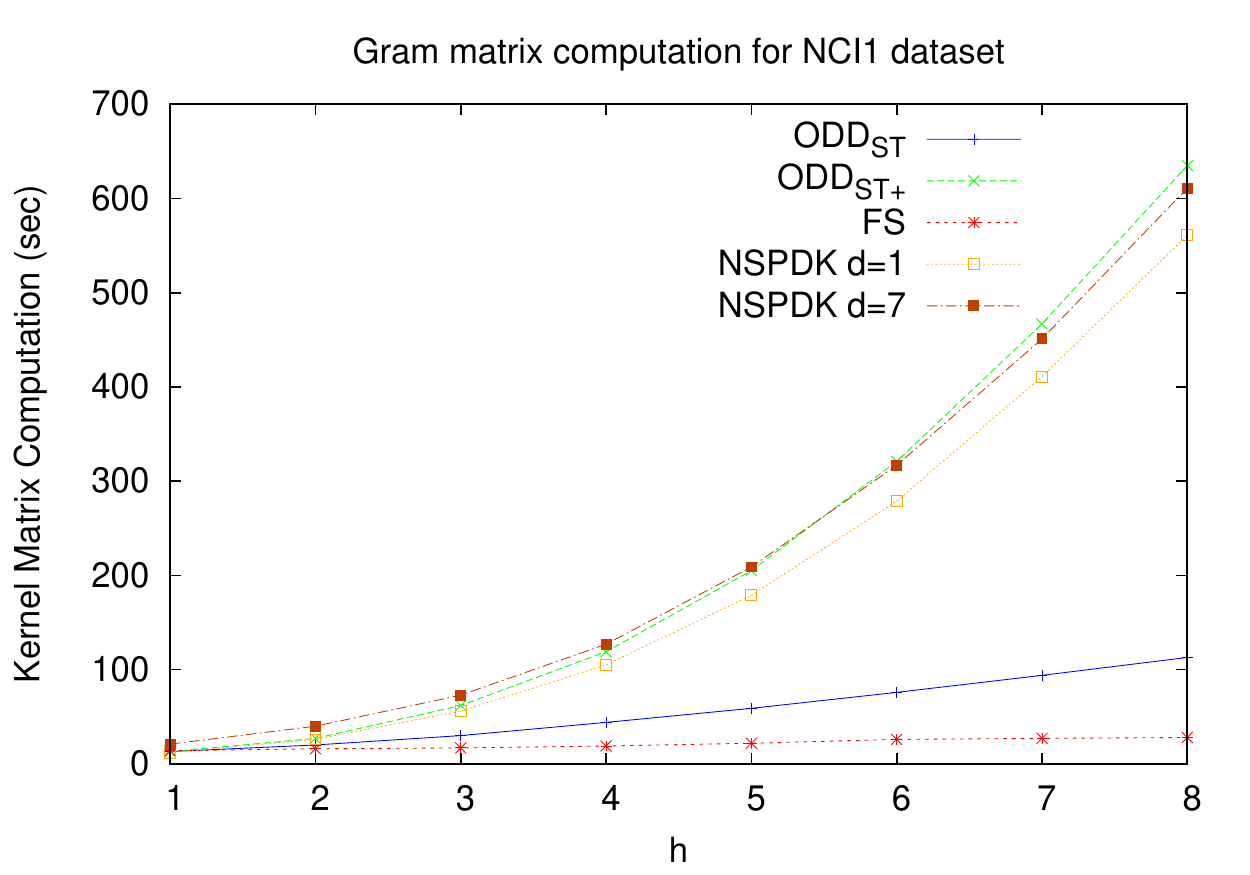}
\caption{{Time needed to compute the kernel matrix for the $\text{ODD-ST}_{h}$, $\text{ODD-\nuovotk}_{h}$, the NSPDK and the FS kernels, as a function of their parameter $h$, on NCI1.\label{fig:tempinci1}}}
\end{figure}
\begin{figure}
	\centering
	\includegraphics[width=0.8\columnwidth]{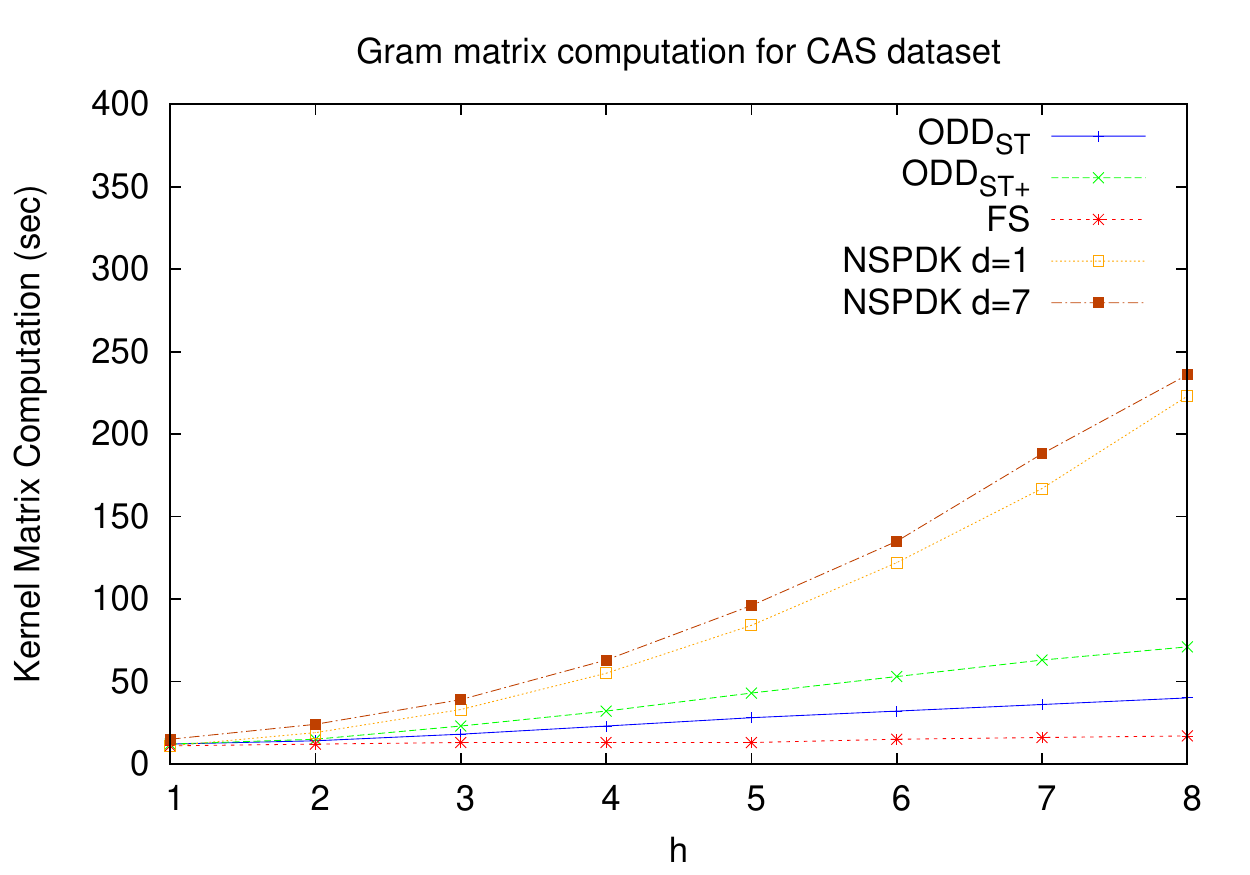}
\caption{{Time needed to compute the kernel matrix for the $\text{ODD-ST}_{h}$, $\text{ODD-\nuovotk}_{h}$, the NSPDK and the FS kernels, as a function of their parameter $h$, on CAS dataset.\label{fig:tempicas}}}
\end{figure}

\textcolor{black}{Figures~\ref{fig:tempinci1} and \ref{fig:tempicas} report the computational times required by the ODD$_{ST_h}$, ODD$_{\nuovotk}$, NSPDK and the FS kernels  as a function of the parameter $h$ determining the size of the considered substructures on the NCI1 and CAS datasets, respectively.}\\

All the experiments are performed on a PC with two Quad-Core AMD Opteron(tm) 2378 Processors and 64GB of RAM. 
{The proposed kernels have been implemented in C++. 
In addition, we implemented a fast version of the FS kernel in C++. 
All these kernels adopt an hashing function, similar in spirit to~\cite{Kersting2013}.
As for the $p$-random walk and \textit{graphlet} kernels, we adopted a publicly available Matlab implementation\footnote{http://www.di.ens.fr/$\sim$shervashidze/code.html}.
Thus, the times for {the $p$-random walk and the \textit{graphlet} kernels} are reported just for a qualitative comparison.\\}
The time needed to compute the kernel matrix for the ODD$_\nuovotk$ kernel increases roughly linearly with respect to the parameter $h$ for both datasets. 
As expected the constant factors are higher than the ones of the ODD$_{ST_h}$, but the ODD$_\nuovotk$ is faster than {(or comparable to)} NSPDK.
\textcolor{black}{Note that we do not report the computational times for ODD$_{ST_h}^{\text{\tiny{TANH}}}$ and ODD$_\nuovotk^{\text{\tiny{TANH}}}$ since their computational requirements are basically the same as the corresponding base kernels: the computation of the novel weight function does not add a significant computational burden.\\} 
\begin{table}[t]
  \centering
 \begin{tabular}{|l|c|c|c|c|}
  \hline
  Kernel	& CAS 			&  AIDS 		& NCI1			& GDD	\\
  \hline
  {Graphlet}		& $58''$ 			&  $54''$  		& $133''$  		& $1715''$	 \\
  {\scriptsize $p$-random walk} & $76h$ 			&  $35h$  		& $-$  		& $-$	 \\
  		& \scriptsize{(h=$7$)}	& \scriptsize{(h=$8$)}	& \scriptsize{(h=$-$)}	& \scriptsize{(h=$-$)} \\
    FS		& $13''$ 			&  $5''$  		& $28''$  		& $17''$	 \\
		& \scriptsize{(h=$3$)}	& \scriptsize{(h=$9$)}	& \scriptsize{(h=$8$)}	& \scriptsize{(h=$1$)} \\
  NSPDK		& $24''$ 			&  $217''$ 		&$192''$ 	 		& $395''$	\\
		& \scriptsize{(h=$2$, d=$6$)}& \scriptsize{(h=$8$,d=$6$)} & \scriptsize{(h=$5$,d=$4$)} & \scriptsize{(h=$2$,d=$6$)} \\
  ODD$_{ST_h}$ 	& $18''$ 			&  $56''$  		& $44''$  		& $29''$  \\
		& \scriptsize{(h=$3$)}	& \scriptsize{(h=$7$)}	& \scriptsize{(h=$4$)}	& \scriptsize{(h=$1$)} \\
		
		 { ODD$_{ST_h}^{\text{\tiny{TANH}}}$ }	& $47''$ 			&  $51''$  		& $110''$  		& $246''$  \\
		& \scriptsize{(h=$5$)}	& \scriptsize{(h=$6$)}	& \scriptsize{(h=$6$)}	& \scriptsize{(h=$2$)} \\
  ODD$_{\nuovotk}$	& $32''$ 			&  $111''$ 		& $205''$ 		& $199''$ \\
		& \scriptsize{(h=$4$)}	& \scriptsize{(h=$8$)}	& \scriptsize{(h=$1$)}	& \scriptsize{(h=$1$)} \\
		{ODD$_{\nuovotk}^{\text{\tiny{TANH}}}$ }	& $179''$ 			&  $61''$  		& $165''$  		& $541''$  \\
		& \scriptsize{(h=$8$)}	& \scriptsize{(h=$5$)}	& \scriptsize{(h=$4$)}	& \scriptsize{(h=$2$)} \\
  \hline
 \end{tabular}
    \caption{Average time required for computing the kernel matrix for the $p$-random walk, the Graphlet, the Fast Subtree, the Neighborhood Subgraph Pairwise Distance, the $ODD_{ST_h}$, the  ODD$_{\text{ST}_h}^{\text{\tiny{TANH}}}$, the $ODD_{\nuovotk}$ and the ODD$_{\text{ST}_+}^{\text{\tiny{TANH}}}$ kernels on CAS, AIDS, NCI1 and GDD datasets with the optimal kernel parameters (reported between brackets). \label{tab:tempi}}
\end{table}

Moreover, in Table~\ref{tab:tempi} we report the average computational time for a single fold with the optimal parameters on the four largest datasets: CAS, AIDS, NCI1, GDD. The parameters influencing the speed of the kernel are reported between brackets. {In this case, we reported the times corresponding to all the considered kernels.}
 Even when comparing the executions related to the optimal parameters, ODD$_\nuovotk$ is faster or comparable to NSPDK {and ODD$_{ST_h}$ is faster or comparable to FS}. 

%
%
\subsection{Experiments on full NCI datasets}
{In this set of experiments, we analyze how the proposed kernels and the competitors scale up with bigger datasets. We considered two datasets, NCI123 and NCI\_AIDS, each one with more than $40,000$ examples (see Table~\ref{tab:datasets}).\\
In NCI123\footnote{http://pubchem.ncbi.nlm.nih.gov/bioassay/123} the growth inhibition of the MOLT-4 human Leukemia tumor cell line is measured as a screen for
anti-cancer activity. For each compound an activity score  of -LogGI50 is measured, where
GI50 is the concentration of the compound required for 50\% inhibition of
tumor growth. A compound is classified as active (positive class) or inactive
(negative class) if the activity score is, respectively, above or below a specified
threshold. The dataset is composed by 40,952 examples.
NCI\_AIDS\footnote{http://wiki.nci.nih.gov/display/NCIDTPdata/AIDS+Antiviral+Screen+Data} is an  anti-HIV database that contains  42,682  molecules, experimentally detected to protect (confirmed active), moderately  protect (confirmed moderate) or not protect (inactive) the CEM cells from HIV-1  infection.
From these classes we derived a binary classification problem, i.e. distinguishing inactive from confirmed and moderately protective molecules.}

{Since these two datasets are unbalanced, for this set of experiments we adopted the Area Under the  Receiver Operating Characteristic curve (AUROC or AUC) as performance measure, since it is suited for unbalanced datasets.
The experimental setup in this case is different w.r.t. the one presented in Section~\ref{expsold}. Indeed, when the number of examples is large, computing the Gram matrix is unfeasible.
In this case, for each considered kernel configuration, we computed the explicit features (memorized in a sparse format) associated to each example. With this explicit feature representation, it is possible to train a linear SVM\footnote{In our implementation we adopted \textit{Liblinear~\citep{Fan2008}}.}. Note that the computed solution is equivalent to the one that can be found by a C-SVM applied to the kernel matrix generated by the graph kernel. However, in this way it is possible to handle very large datasets in a reasonable amount of time.} 
{A 10x10 CV test with confidence level 95\% (and 10 degrees of freedom) has been executed between each pair of kernels on the two datasets \cite{Japkowicz:2011:ELA:1964882}. } 
\begin{table}[t]
\centering
{
\begin{tabular}{|l|c|c|}
\hline
\scriptsize $Kernel$  	& \scriptsize NCI123 & \scriptsize NCI\_AIDS \\
\hline
\scriptsize Graphlet & 54.93{\bfseries *} (7) \vspace{-0.1cm}& $67.74${\bfseries *} (7)\\
 &  {\tiny $\pm 0.24$}& {\tiny $\pm 0.15$} \\
\scriptsize FS & $61.08${\bfseries *} (6) \vspace{-0.1cm}& $83.73${\bfseries *} (5) \\
 &  {\tiny $\pm 0.34$}& {\tiny $\pm 0.17$} \\
\scriptsize NSPDK		& $62.45$  (3) \vspace{-0.1cm}& $83.80${\bfseries *} (3)\\
& {\tiny $\pm 0.39$} & {\tiny $\pm 0.23$} \\
\scriptsize ODD$_{\text{ST}_h}$	& $62.11$  (4) & $83.77${\bfseries *} (4) \\
& {\tiny $\pm 0.30$} & {\tiny $\pm 0.22$}  \\
\scriptsize ODD$_{\text{ST}_h}^{\text{\tiny{TANH}}}$	& $62.76$  (2) \vspace{-0.1cm}& $85.56$ (2) \\
 &{\tiny $\pm 0.21$}& {\tiny $\pm 0.23$} \\
\scriptsize ODD$_{\text{ST}_+}$	& $61.70${\bfseries *} (5) \vspace{-0.1cm}& $83.36${\bfseries *} (6) \\
 & {\tiny $\pm 0.36$}& {\tiny $\pm 0.30$} \\
\scriptsize ODD$_{\text{ST}_+}^{\text{\tiny{TANH}}}$		& {\bfseries 63.20} (1) \vspace{-0.1cm}& {\bfseries85.64} (1)\\
& {\tiny $\pm 0.29$} & {\tiny $\pm 0.15$} \\
\hline
\end{tabular}
 \caption{Average AUC results $\pm$ standard deviation in nested 10-fold cross validation for the Graphlet, the Fast Subtree, the Neighborhood Subgraph Pairwise Distance, the $ODD_{ST_h}$, the $ODD_{\nuovotk}$, the ODD$_{\text{ST}_h}^{\text{\tiny{TANH}}}$ and the $ODD_{\text{ST}_+}^{\text{\tiny{TANH}}}$	
  kernels obtained on NCI123 and NCI\_AIDS  datasets. The rank of the kernel is reported between brackets. The symbol {\bfseries *} denotes the kernels whose performance difference with respect to the top-ranked kernel is statistically significant. \label{tab:nestedkfoldresultsBig}}
  }
\end{table}
{Table~\ref{tab:nestedkfoldresultsBig} reports the AUC results obtained, for the two considered datasets, 
by kernels for which it is possible to generate the explicit feature space representation of input examples. 
The combination of the techniques proposed in the paper, $\text{ST}_+$ and \textit{tanh}, leads to best performances on both datasets. 
The performance difference between ODD$_{\text{ST}_+}$ e ODD$_{\text{ST}_+}^{\text{\tiny{TANH}}}$ is statistically significant on both datasets. 
The use of \textit{tanh} yields statistically significant improved performances for ODD$_{\text{ST}_+}^{\text{\tiny{TANH}}}$ on NCI\_AIDS with respect to all other kernels except ODD$_{\text{ST}_h}^{\text{\tiny{TANH}}}$.} 

{Figure~\ref{fig:tempinci123} reports the average computational time required to perform the learning procedure for a fixed kernel, as a function of the $h$ parameter, for the NCI123 and NCI\_AIDS datasets. This procedure comprehends the feature generation step, and the training phase of the linear SVM model. We decided to report the overall times here because the run-times of linear SVM depends on the characteristics of the kernel, and thus comparing only the feature generation part would not be fair.
With the considered learning procedure, the number of non-zero features generated by the kernel influences the total run-time. Indeed, the FS kernel is the fastest one, being the one that generates the smallest number of features.
The time required by the training procedure grows almost linearly for ODD$_{\text{ST}_h}$, ODD$_{\text{ST}_h}^{\text{\tiny{TANH}}}$ and ODD$_{\nuovotk}$, while it grows more than linearly for ODD$_{\nuovotk}^{\text{\tiny{TANH}}}$. Note, however, that ODD$_{\nuovotk}^{\text{\tiny{TANH}}}$ is still faster than NSPDK. 
It is interesting to note that NSPDK with $d=1$ is slower than NSPDK with $d=7$ on NCI123, even if the latter has a larger feature space. In this case, probably the former kernel is less discriminative and thus the corresponding optimization problem that the linear SVM must solve is more difficult.
}\\
\begin{figure}[t]
	\centering
	\includegraphics[width=0.9\columnwidth]{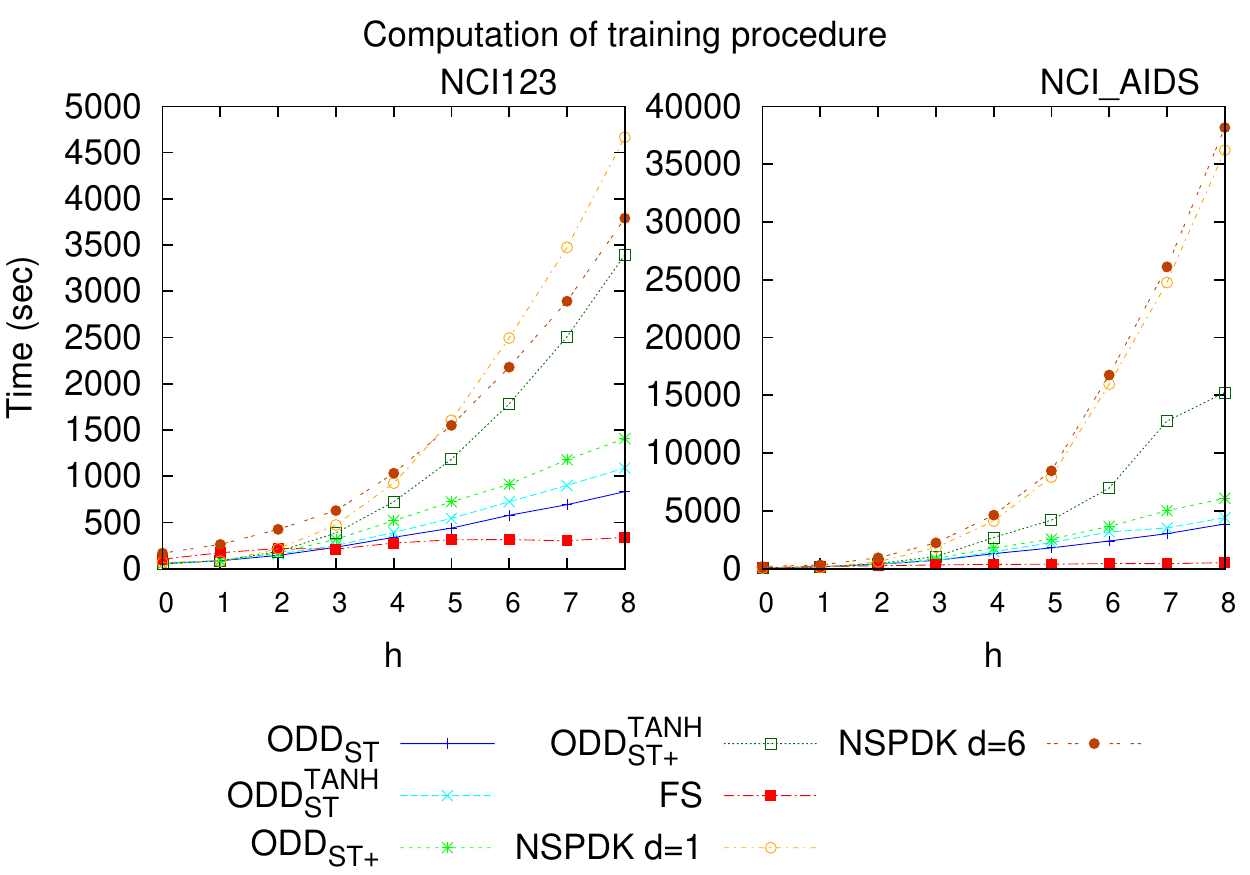}
	\caption{{Time needed to perform all the training procedure, as a function of $h$, for all the considered kernels on NCI123 (left) and NCI\_AIDS (right) datasets.\label{fig:tempinci123}}}
\end{figure}
\begin{table}[p]
\centering
{
\begin{tabular}{|l|c|c|}
\hline
\scriptsize $Kernel$  	& \scriptsize NCI123 & \scriptsize NCI\_AIDS \\
\hline
\scriptsize Graphlet & 698 \vspace{-0.1cm}& 1772\\
 &  {\tiny (C=0.01)}& {\tiny (C=0.001)} \\
\scriptsize FS &  261\vspace{-0.1cm}& 237 \\
 &  {\tiny (h=4,C=0.1)}& {\tiny (h=3,C=0.1)} \\
\scriptsize NSPDK \vspace{-0.1cm}& 246 &1240 \\
& {\tiny (h=2,d=5,C=1)} & {\tiny (h=3,d=6,C=1)} \\
\scriptsize ODD$_{\text{ST}_h}$	& 850 & 1608 \\
& {\tiny (h=7,C=100)} & {\tiny (h=5,C=100)}  \\
\scriptsize ODD$_{\text{ST}_h}^{\text{\tiny{TANH}}}$	& 692  \vspace{-0.1cm}& 2219 \\
 &{\tiny (h=6,C=1)}& {\tiny (h=8,C=1)} \\
\scriptsize ODD$_{\text{ST}_+}$	& 924 \vspace{-0.1cm}& 790  \\
 & {\tiny (h=5,C=10)}& {\tiny (h=3,C=10)} \\
\scriptsize ODD$_{\text{ST}_+}^{\text{\tiny{TANH}}}$		& 694 \vspace{-0.1cm}& 7739 \\
& {\tiny (h=4,C=1)} & {\tiny (h=8,C=1)} \\
\hline
\end{tabular}
 \caption{Time needed to perform all the training procedure with the optimal parameter configuration (reported between brackets) for all the considered kernels on NCI123 and NCI\_AIDS datasets.\label{tab:TimesBig}}
  }
\end{table}
{Table~\ref{tab:TimesBig} reports the computational time required to compute the different kernels with the optimal parameters obtained by a 10-fold cross validation. Note that higher computational times generally corresponds to higher values for the optimal $h$ parameter.}\\
\begin{figure}[p]
	\centering
	{
	\includegraphics[width=0.8\columnwidth]{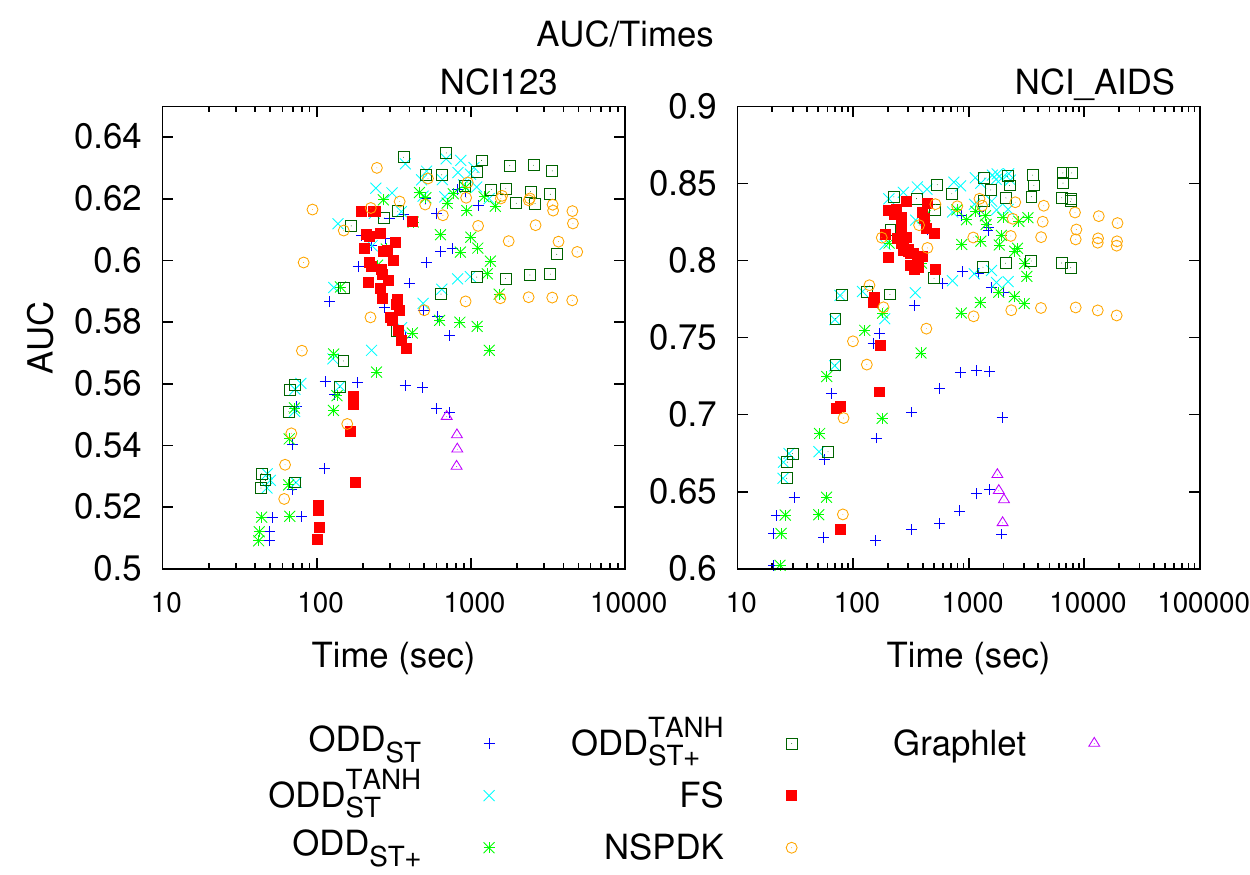}
	\caption{{Relationship between the AUC (obtaindes in 10-fold cross validation) and the time needed to perform all the training procedure. A point is reported for each $h$ and $C$ parameters combination, for all the considered kernels on NCI123 (left) and NCI\_AIDS (right) dataset.Note that the $x$ axis is in log scale.}\label{fig:tempiAUC}}
	}
	\end{figure}
{On the considered datasets, higher AUC corresponds to higher computational times for the respective kernel. 
It is interesting to analyze the relationship between AUC values and running times for non-optimal parameters, i.e. to understand which kernel is the most convenient if there is a strict time constraint to comply to. 
 Figure~\ref{fig:tempiAUC} plots the performances of the different kernels with respect to the time required to perform the training procedure, for NCI123 and NCI\_AIDS datasets.
In NCI123 dataset, ODD$_{\text{ST}_h}^{\text{\tiny{TANH}}}$ and ODD$_{\nuovotk}^{\text{\tiny{TANH}}}$ have the highest points in the plot starting from approximatively a runtime of 400 seconds. Below that computational time, the NSPDK is the best performing kernel. 
On the other hand, on NCI\_AIDS dataset, ODD$_{\text{ST}_h}^{\text{\tiny{TANH}}}$ and ODD$_{\nuovotk}^{\text{\tiny{TANH}}}$ are the better performing kernels for almost every time threshold.}

\section{Conclusions and future works} \label{sec:conclusions}

\textcolor{black}{The contribution of this paper is twofold.
First, we propose a novel instance of the ODD graph kernel based on a novel tree kernel, $\nuovotk$. 
This constitutes an example of how the generality of the framework can 
potentially lead to the definition of novel graph kernels that can improve the state-of-the-art.
Second, we define a novel, non-linear, feature weighting scheme for the ODD kernels, that can in principle be applied to any graph kernel with an explicit feature space representation.
As a future work, we plan to apply this and other weighting schemes also to other state-of-the-art graph kernels.
The experimental results show that the proposed kernels have state of the art performances on six benchmark graph datasets from bioinformatics, and on two graph datasets for image classification.}
{Moreover, experiments on two large graph datasets show that our approach is able to scale up to real-world sized datasets.}
  
\section*{Acknowledgments}
This work was supported by the University of Padova under the strategic project \textit{BIOINFOGEN}.

\section*{References}

\bibliographystyle{elsarticle-num-names}
\bibliography{Bibliografia/Mendeley,Bibliografia/swork}

\end{document}